\documentclass{article}
\usepackage[accepted]{aistats2023}

\usepackage[round]{natbib}

\usepackage{algorithm}
\usepackage{algorithmicx}
\usepackage[noend]{algpseudocode}
\usepackage{amsfonts}
\usepackage{amsmath}
\usepackage{amssymb}
\usepackage{amsthm}
\usepackage{bbm}
\usepackage{bm}
\usepackage{color}
\usepackage{dirtytalk}
\usepackage{dsfont}
\usepackage{enumerate}
\usepackage{graphicx}
\usepackage{listings}
\usepackage{mathtools}
\usepackage{subfigure}
\usepackage{times}
\usepackage[normalem]{ulem}
\usepackage{url}
\usepackage{xspace}

\usepackage[usenames,dvipsnames]{xcolor}
\usepackage[bookmarks=false]{hyperref}
\hypersetup{
%   pdftex,
  pdffitwindow=true,
  pdfstartview={FitH},
  pdfnewwindow=true,
  colorlinks,
  linktocpage=true,
  linkcolor=Green,
  urlcolor=Green,
  citecolor=Green
}
\usepackage[capitalize,noabbrev]{cleveref}

%%%%%%%%%%%%%%%%%%% TIKZ %%%%%%%%%%%%%%%%%%
\usepackage{tikz}
\usetikzlibrary{bayesnet}
%%%%%%%%%%%%%%%%%%% TIKZ %%%%%%%%%%%%%%%%%%

\usepackage[textsize=tiny]{todonotes}
 % Brano's comments
 % Manzil's comments
 % Joey's comments
 % Sumeet's comments
 % MGH's comments

\usepackage{thmtools}
\declaretheorem[name=Theorem,refname={Theorem,Theorems},Refname={Theorem,Theorems}]{theorem}
\declaretheorem[name=Lemma,refname={Lemma,Lemmas},Refname={Lemma,Lemmas},sibling=theorem]{lemma}

\declaretheorem[name=Assumption,refname={Assumption,Assumptions},Refname={Assumption,Assumptions}]{assumption}

\newcommand{\cA}{\mathcal{A}}

\newcommand{\cD}{\mathcal{D}}

\newcommand{\cN}{\mathcal{N}}

\newcommand{\cS}{\mathcal{S}}

\newcommand{\cX}{\mathcal{X}}

\newcommand{\realset}{\mathbb{R}}

\newcommand{\E}[1]{\mathbb{E} \left[#1\right]}
\newcommand{\condE}[2]{\mathbb{E} \left[#1 \,\middle|\, #2\right]}
\newcommand{\Erv}[2]{\mathbb{E}_{#1} \left[#2\right]}

\newcommand{\condprob}[2]{\mathbb{P} \left(#1 \,\middle|\, #2\right)}

\newcommand{\condvar}[2]{\mathrm{var} \left[#1 \,\middle|\, #2\right]}

\newcommand{\condcov}[2]{\mathrm{cov} \left[#1 \,\middle|\, #2\right]}

\newcommand{\abs}[1]{\left|#1\right|}

\newcommand{\I}[1]{\mathds{1} \! \left\{#1\right\}}

\newcommand{\normw}[2]{\|#1\|_{#2}}

\newcommand{\set}[1]{\left\{#1\right\}}

\newcommand{\T}{^\top}

\DeclareMathOperator*{\argmax}{arg\,max\,}

\let\trace\relax
\DeclareMathOperator{\trace}{tr}
\mathchardef\mhyphen="2D

\newcommand{\flatopo}{\ensuremath{\tt FlatOPO}\xspace}
\newcommand{\hieropo}{\ensuremath{\tt HierOPO}\xspace}
\newcommand{\oracleopo}{\ensuremath{\tt OracleOPO}\xspace}

\begin{document}

\twocolumn[

\aistatstitle{Multi-Task Off-Policy Learning from Bandit Feedback}

\aistatsauthor{Joey Hong \And Branislav Kveton \And Sumeet Katariya \And Manzil Zaheer \And Mohammad Ghavamzadeh}

\aistatsaddress{UC Berkeley \And Amazon \And Amazon \And Deepmind \And Google}]

\begin{abstract}
Many practical applications, such as recommender systems and learning to rank, involve solving multiple similar tasks. One example is learning of recommendation policies for users with similar movie preferences, where the users may still rank the individual movies slightly differently. Such tasks can be organized in a hierarchy, where similar tasks are related through a shared structure. In this work, we formulate this problem as a contextual off-policy optimization in a hierarchical graphical model from logged bandit feedback. To solve the problem, we propose a hierarchical off-policy optimization algorithm (\hieropo), which estimates the parameters of the hierarchical model and then acts pessimistically with respect to them. We instantiate \hieropo in linear Gaussian models, for which we also provide an efficient implementation and analysis. We prove per-task bounds on the suboptimality of the learned policies, which show a clear improvement over not using the hierarchical model. We also evaluate the policies empirically. Our theoretical and empirical results show a clear advantage of using the hierarchy over solving each task independently.
\end{abstract}

\section{Introduction}
\label{sec:introduction}

Many interactive systems (search, online advertising, and recommender systems) can be modeled as a \emph{contextual bandit} \citep{li10contextual,chu11contextual}, where an agent, or \emph{policy}, observes a \emph{context}, takes one of $K$ possible \emph{actions}, and receives a \emph{stochastic reward} for the action. In many applications, it is prohibitively expensively to learn policies online by contextual bandit algorithms, because exploration has a major impact on user experience. However, offline data collected by a previously deployed policy are often available. Offline, or \emph{off-policy}, optimization using such logged data is a practical way of learning policies without costly online interactions \citep{dudik14doubly,swaminathan2015counterfactual}.

Because we cannot explore beyond the logged dataset, it is critical to design learning algorithms that use the data in the most efficient way. One way of achieving this is by leveraging the structure of the problem. As an example, in bandit algorithms, we could achieve higher statistical efficiency by using the form of the reward distribution \citep{garivier11klucb}, prior distribution over model parameters \citep{thompson33likelihood,agrawal12analysis,chapelle11empirical,russo18tutorial}, or by conditioning on feature vectors \citep{dani08stochastic,abbasi-yadkori11improved,agrawal13thompson}. In this work, we consider a natural structure where we design policies for multiple similar tasks, where the tasks are related through a \emph{hierarchical Bayesian model} \citep{gelman13bayesian,kveton21metathompson,hong2022hierarchical}. Each task is parameterized by a \emph{task parameter} sampled i.i.d.\ from a distribution parameterized by a \emph{hyper-parameter}. These parameters are unknown and relate the tasks, in the sense that data from one task can help with learning a policy for another task.

Although the tasks are similar, they are sufficiently different to require different polices, and we address this multi-task off-policy learning problem in this work. To solve the problem, we propose an algorithm called hierarchical off-policy optimization (\hieropo). Because off-policy algorithms must reason about counterfactual rewards of actions that do not appear in the logged dataset, a common approach is to learn pessimistic, or \emph{lower confidence bound (LCB)}, estimates of the mean rewards and act according to them \citep{buckman2020importance,jin2021pessimism}. \hieropo is an instance of this approach where high-probability LCBs are estimated using a hierarchical model.

Our paper makes the following contributions. First, we discuss how hierarchy can improve statistical efficiency, which motivates our algorithm \hieropo. The key idea in \hieropo is to factorize the computation of LCBs by separately considering the uncertainty of the hyper-parameter and the conditional uncertainty of task parameters. Second, we consider a specific hierarchical model, a linear Gaussian model, where we obtain closed forms for the LCBs that can be computed efficiently. Third, we derive Bayesian suboptimality bounds for the policies learned by \hieropo and show that they improve upon off-policy approaches that do not use the hierarchy. To the best of our knowledge, we are the first to consider Bayesian bounds in the off-policy setting. Finally, we evaluate \hieropo on synthetic problems and an application to a multi-user recommendation system.

\section{Setting}
\label{sec:setting}

\textbf{Notation.} Random variables are capitalized, except for Greek letters like $\theta$. For any positive integer $n$, we define $[n] = \set{1, \dots, n}$. The indicator function is denoted by $\I{\cdot}$. The $i$-th entry of vector $v$ is $v_i$. If the vector is already indexed, such as $v_j$, we write $v_{j, i}$. For any matrix $M \in \realset^{d \times d}$, the maximum and minimum eigenvalues are $\lambda_1(M)$ and $\lambda_d(M)$, respectively.

We consider a learning agent that interacts with a set of contextual bandit instances. In each interaction, the agent observes a \emph{context} $x \in \cX$, takes an \emph{action} $a$ from an \emph{action set} $\cA$ of size $K$, and then observes a \emph{stochastic reward} $Y \in \realset$. The contexts are sampled from the \emph{context distribution} $P_\mathsf{x}$. Conditioned on context and action, the reward is sampled from the \emph{reward distribution} $P(\cdot \mid x, a; \theta)$, where $\theta \in \Theta$ is a parameter of the bandit instance, which is \emph{shared by all contexts and actions}. We assume that the rewards are $\sigma^2$-sub-Gaussian and denote by $r(x, a; \theta) = \Erv{Y \sim P(\cdot \mid x, a; \theta)}{Y}$ the mean reward of action $a$ in context $x$ under parameter $\theta$.

In this work, the learning agent simultaneously solves $m$ contextual bandit instances, which we denote by $\cS = [m]$ and refer to as \emph{tasks}. Therefore, we call our problem a \emph{multi-task contextual bandit} \citep{azar13sequential,deshmukh17multitask,cella20metalearning,kveton21metathompson,moradipari21parameter}. Each task $s \in \cS$ is parameterized by a \emph{task parameter} $\theta_{s, *} \in \Theta$, which is sampled i.i.d.\ from a \emph{task prior distribution} $\theta_{s, *} \sim P(\cdot \mid \mu_*)$. The task prior is parameterized by an unknown \emph{hyper-parameter} $\mu_*$, which is sampled from a \emph{hyper-prior} $Q$. That one is known to the agent and represents its prior knowledge about $\mu_*$. In a recommender system, each task could be an individual user, the task parameter could encode user's preferences, and the hyper-parameter could encode the average preferences of a cluster of similar users. We use this setup in our experiments in \cref{sec:experiments}. A similar setup was studied previously in the online setting by \citet{hong22hierarchical}.

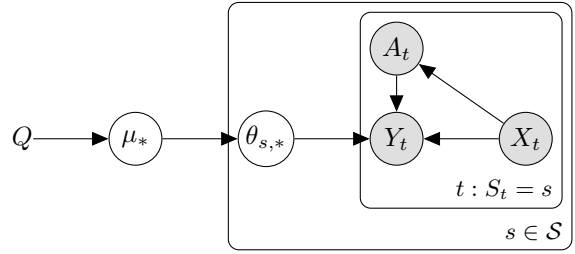
\begin{figure}
  \begin{tikzpicture}

  % Define nodes
  \node[obs] (x) {$X_{t}$};
  \node[obs, left=of x] (y) {$Y_{t}$};
  \node[obs, left=of x, yshift=12mm] (a) {$A_{t}$};
  % \node[det, yshift=12mm] (s) {$S_{t}$};
  \node[latent, left=of y] (theta) {$\theta_{s, *}$};
  \node[latent, left=of theta]  (mu) {$\mu_*$};
  \node[const, left=of mu] (q) {$Q$};
  % Connect the nodes
  \edge {q} {mu} ;
  \edge {mu} {theta} ; %
  \edge {theta} {y} ; %
  \edge {x} {y} ;
  \edge {a} {y} ;
  \edge {x} {a} ;
  % \edge {s} {y} ;
  % \edge {s} {a} ;

  % Plates
  \plate {y1} { (a) (x) (y)} {$t: S_t = s $} ;
  \plate {theta} {(theta) (y1)} {$s\in \cS $} ;

\end{tikzpicture}
  \caption{A graphical model of our multi-task contextual bandit setting.}
  \label{fig:graphical model}
  \vspace{-0.1in}
\end{figure}

Unlike prior works in multi-task bandits, we aim to solve this problem offline. Let $\Pi = \{\pi: \cX \to \cA\}$ be the set of \emph{stationary deterministic policies}. For any policy $\pi$ and context $x$, we denote by $\pi(x)$ the action suggested by $\pi$ in context $x$. In our multi-task bandit setting, each task has its own parameter, and thus we may need a different policy to solve it. Therefore, we consider the set of \emph{task-conditioned policies} $\pi \in \Pi^\cS = \{(\pi_s)_{s \in \cS}: \pi_s \in \Pi\}$, where $\pi_s$ is the policy for task $s$. Note that we consider deterministic policies solely to simplify notation, and that our results extend to stochastic policies by accounting for an additional expectation over actions.

A logged dataset of past interactions is an input to off-policy evaluation and optimization. In our setting, we have access to a dataset $\cD = \{(S_t, X_t, A_t, Y_t)\}_{t \in [n]}$ of $n$ observations, where $S_t \in \cS$ is a task, $X_t \sim P_\mathsf{x}$ is a context, $A_t = \pi_{0, S_t}(X_t)$ is an action, and $Y_t \sim P(\cdot \mid X_t, A_t; \theta_{S_t, *})$ is a reward in observation $t$. Here $\pi_0 \in \Pi^\cS$ is a \emph{logging policy}, some task-conditioned policy that is used to collect $\cD$. 
A graphical model of our setting is shown in \cref{fig:graphical model}.
Unlike many works in off-policy learning, we do not require that $\pi_0$ is known \citep{dudik14doubly,swaminathan2015counterfactual}.

The \emph{value of policy} $\pi_s \in \Pi$ in task $s \in \mathcal{S}$ with parameter $\theta_{s, *}$ is defined as
\begin{align*}
  V(\pi_s; \theta_{s, *})
  = \condE{r(X, \pi_s(X); \theta_{s, *})}{\theta_{s, *}}\,,
\end{align*}
where the randomness is only over context $X \sim P_\mathsf{x}$. The \emph{optimal policy} $\pi_{s, *}$ is defined as
\begin{align*}
  \pi_{s, *}
  = \argmax_{\pi \in \Pi} V(\pi; \theta_{s, *})
\end{align*}
and the \emph{suboptimality} of policy $\pi_s$ is
\begin{align*}
  V(\pi_{s, *}; \theta_{s, *}) - V(\pi_s; \theta_{s, *})\,.
\end{align*}
We study the Bayesian setting, where the logged dataset $\cD$ provides additional information about the parameter $\theta_{s, *}$. In particular, let $\hat{P}_s(\theta) = \condprob{\theta_{s, *} = \theta}{\cD}$ be the posterior distribution of $\theta_{s, *}$ in task $s$ given $\cD$. Then, by definition, $\theta_{s, *} \mid \cD \sim \hat{P}_s$. Our goal is to learn a policy, for any given task $s$, that is comparable to likely $\pi_{s, *} \mid \cD$. We formalize this objective using a high-probability bound. For a \emph{fixed confidence level} $\delta \in (0, 1)$, we want to learn a policy $\hat{\pi}_s \in \Pi$ that minimizes $\varepsilon$ in
\begin{align}
  \condprob{V(\pi_{s, *}; \theta_{s, *}) - V(\hat{\pi}_s; \theta_{s, *})
  \leq \varepsilon}{\cD}
  \geq 1 - \delta\,,
  \label{eq:multi-task suboptimality}
\end{align}
where $\varepsilon$ is a function of $\delta$, the environment parameters, $\cD$, and $\hat{\pi}_s$. Note that $\pi_{s, *}$ is random because it is a function of random $\theta_{s, *} \mid \cD \sim \hat{P}_s$.

The Bayesian view allows us to derive error bounds with two new properties. First, the error $\varepsilon$ decreases with a more informative prior on $\theta_{s, *}$. Second, the bounds capture the structure of our hierarchical problem and show that it helps. Although our objective and analysis style are novel, they are motivated by Bayes regret bounds in bandits \citep{russo14learning,lu19informationtheoretic,kveton21metathompson,hong22hierarchical}, which have similar properties that allow them to improve upon their frequentist counterparts \citep{abbasi-yadkori11improved,agrawal13thompson}.

\section{Algorithm}
\label{sec:algorithm}

Prior works in off-policy bandit and reinforcement learning often design pessimistic lower confidence bounds and then act on them \citep{jin2021pessimism}. We follow the same design principle. For any task $s$, context $x$, and action $a$, we want to estimate a LCB satisfying $L_s(x, a) \leq r(x, a; \theta_{s, *})$, with a high probability for $\theta_{s, *} \mid \cD$. We seek the LCBs of the form $L_s(x, a) = \hat{r}_s(x, a) - c_s(x, a)$, where
\begin{equation}
\begin{aligned}
  \hat{r}_s(x, a)
  & = \condE{r(x, a; \theta_{s, *})}{\cD}\,,
  \label{eq:lcb components} \\
  c_s(x, a)
  & = \alpha \sqrt{\condvar{r(x, a; \theta_{s, *})}{\cD}}\,,
\end{aligned}
\end{equation}
are the estimated mean reward and its confidence interval width, and $\alpha > 0$ is a tunable parameter.

An important case of contextual models are those with linear rewards \citep{abbasi-yadkori11improved,jin2021pessimism}. In our paper, we assume that $r(x, a; \theta_{s, *}) = \phi(x, a)\T \theta_{s, *}$ for each task $s$, where $ \theta_{s, *}$ is the task parameter and $\phi: \cX \times \cA \to \mathbb{R}^d$ is some \emph{feature extractor}. Under this assumption, we may write \eqref{eq:lcb components} using the posterior mean and covariance of $\theta_{s, *}$ as
\begin{equation}
\begin{aligned}
  \hat{r}_s(x, a)
  & = \phi(x, a)\T \condE{\theta_{s, *}}{\cD}\,,
  \label{eq:lcb components linear} \\
  c_s(x, a)
  & = \alpha \sqrt{\phi(x, a)\T \condcov{\theta_{s, *}}{\cD} \phi(x, a)}\,.
\end{aligned}
\end{equation}
The above is desirable because it separates the posterior of the task parameter from context.

The rest of this section is organized as follows. In \cref{sec:hierarchical pessimism}, we derive the mean reward estimate and its confidence interval width for a general two-level hierarchical model. We also propose a general hierarchical off-policy optimization (\hieropo) in this model. In \cref{sec:hierarchical gaussian pessimism}, we instantiate this model as a linear Gaussian model. We discuss alternative algorithm designs in \cref{sec:alternative designs}.

\subsection{Hierarchical Pessimism}
\label{sec:hierarchical pessimism}

For any task $s$, the mean $\condE{\theta_{s, *}}{\cD}$ in \eqref{eq:lcb components linear} can be estimated hierarchically as follows. Let $\cD_s$ be the subset of dataset $\cD$ corresponding to task $s$. By the law of total expectation,
\begin{align}
  \condE{\theta_{s, *}}{\cD}
  & = \condE{\condE{\theta_{s, *}}{\mu_*, \cD}}{\cD}
  \label{eq:posterior task mean} \\
  & = \condE{\condE{\theta_{s, *}}{\mu_*, \cD_s}}{\cD}\,.
  \nonumber
\end{align}
The second equality holds since conditioning on $\mu_*$ makes $\theta_{s, *}$ independent of $\cD \setminus \cD_s$, as can be seen in \cref{fig:graphical model}. The above decomposition is motivated by the observation that estimating each $\condE{\theta_{s, *}}{\mu_*, \cD_s}$ is an easier problem than $\condE{\theta_{s, *}}{\cD}$, since $\cD_s$ is from a single task $s$. The information sharing between the tasks is still captured by $\mu_*$, which has to be learned from the entire logged dataset $\cD$.

Similarly, the covariance $\condcov{\theta_{s, *}}{\cD}$ in \eqref{eq:lcb components linear} can be decomposed using the law of total covariance,
\begin{align}
  & \condcov{\theta_{s, *}}{\cD}
  \label{eq:posterior task covariance} \\
  & \ = \condE{\condcov{\theta_{s, *}}{\mu_*, \cD}}{\cD} +
  \condcov{\condE{\theta_{s, *}}{\mu_*, \cD}}{\cD}
  \nonumber \\
  & \ = \condE{\condcov{\theta_{s, *}}{\mu_*, \cD_s}}{\cD} +
  \condcov{\condE{\theta_{s, *}}{\mu_*, \cD_s}}{\cD}\,.
  \nonumber
\end{align}
Again, the second equality holds since conditioning on $\mu_*$ makes $\theta_{s, *}$ independent of $\cD \setminus \cD_s$. Note that \eqref{eq:posterior task covariance} comprises two interpretable terms. The first captures the uncertainty of $\theta_{s, *}$ conditioned on $\mu_*$, whereas the second captures the uncertainty in $\mu_*$. Such decompositions decouple the two sources of uncertainty in our hierarchical model, and are powerful tools for estimating uncertainty in structured models \citep{hong22deep}.

Now we plug \eqref{eq:posterior task mean} and \eqref{eq:posterior task covariance} into \eqref{eq:lcb components linear}, and get
\begin{align*}
  \hat{r}_s(x, a)
  & = \phi(x, a)\T \condE{\condE{\theta_{s, *}}{\mu_*, \cD_s}}{\cD}\,, \\
  c_s(x, a)
  & = \alpha \sqrt{\phi(x, a)\T \hat{\Sigma}_s \phi(x, a)}\,,
\end{align*}
where
\begin{align*}
  \hat{\Sigma}_s
  = \condE{\condcov{\theta_{s, *}}{\mu_*, \cD_s}}{\cD} +
  \condcov{\condE{\theta_{s, *}}{\mu_*, \cD_s}}{\cD}\,.
\end{align*}
With this in mind, we propose a general algorithm for hierarchical off-policy optimization, which we call \hieropo and report its pseudo-code in \cref{alg:hieropo}.

\begin{algorithm}[t]
  \caption{\hieropo: Hierarchical off-policy optimization.}
  \label{alg:hieropo}
  \begin{algorithmic}[1]
    \State \textbf{Input:} Dataset $\cD$
    \For{$s \in \cS, x \in \cX$}
      \For{$a \in \cA$}
        \State Compute $\hat{r}_s(x, a)$ and $c_s(x, a)$ (\cref{sec:hierarchical pessimism})
        \State $L_s(x, a) \gets \hat{r}_s(x, a) - c_s(x, a)$
        \State $\hat{\pi}_s(x) \gets \argmax_{a \in \cA} L_s(x, a)$
      \EndFor
    \EndFor
    \State \textbf{Output:} $\hat{\pi} \gets (\hat{\pi}_s)_{s \in \cS}$
  \end{algorithmic}
\end{algorithm}

\subsection{Hierarchical Gaussian Pessimism}
\label{sec:hierarchical gaussian pessimism}

The computation of \eqref{eq:posterior task mean} and \eqref{eq:posterior task covariance} requires integrating out the hyper-parameter $\mu_*$ and task parameter $\theta_{s, *}$. This is generally impossible in a closed form, although many powerful approximations exist \citep{doucet01sequential}. In this section, we consider the case where the hyper-prior and task prior distributions are Gaussian. In this case, \hieropo can be implemented exactly and efficiently. The later analysis of \hieropo (\cref{sec:multi-task analysis}) is also under this assumption.

Specifically, we consider a linear Gaussian model where the known hyper-prior is $Q = \cN(\mu_q, \Sigma_q)$ for some PSD matrix $\Sigma_q$ and the task prior is $P(\cdot \mid \mu_*) = \cN(\mu_*, \Sigma_0)$ for some known PSD $\Sigma_0$. The reward distribution of action $a$ in context $x$ is $\cN(\phi(x, a)\T \theta_{s, *}, \sigma^2)$, where $\phi$ is a feature extractor and $\sigma > 0$ is a known reward noise. This implies that the mean reward is linear in features.

To derive \eqref{eq:posterior task mean} and \eqref{eq:posterior task covariance}, we start with understanding posterior distributions of $\theta_{s, *}$ and $\mu_*$. Specifically, since conditioning in Gaussian graphical models preserves Gaussianity, we have that $\theta_{s, *} \mid \mu_*, \cD_s \sim \cN(\tilde{\mu}_s, \tilde{\Sigma}_s)$ for some $\tilde{\mu}_s$ and $\tilde{\Sigma}_s$. From the structure of our model (\cref{fig:graphical model}), we further note that this is a standard posterior of a linear model with a Gaussian prior $\cN(\mu_*, \Sigma_0)$, and thus,
\begin{equation}
\begin{aligned}
  \tilde{\mu}_s
  & = \condE{\theta_{s, *}}{\mu_*, \cD_s}
  = \tilde{\Sigma}_s (\Sigma_0^{-1} \mu_* + B_s)\,,
  \label{eq:linear task posterior} \\
  \tilde{\Sigma}_s
  & = \condcov{\theta_{s, *}}{\mu_*, \cD_s}
  = (\Sigma_0^{-1} + G_s)^{-1}\,,
\end{aligned}
\end{equation}
where the statistics
\begin{align}
  B_s
  & = \sigma^{-2} \sum_{t = 1}^n \I{S_t = s} \phi(X_t, A_t) Y_t\,,
  \nonumber \\
  G_s
  & = \sigma^{-2} \sum_{t = 1}^n \I{S_t = s} \phi(X_t, A_t) \phi(X_t, A_t)\T\,,
  \nonumber
\end{align}
are computed using the subset $\cD_s$ of the logged dataset $\cD$.

The posterior of the hyper-parameter $\mu_* \mid \cD$, known as the hyper-posterior, also has a closed-form $\cN(\bar{\mu}, \bar{\Sigma})$ (Section 4.2 of \citealt{hong22hierarchical}), where
\begin{equation}
\begin{aligned}
  \bar{\mu}
  & = \condE{\mu_*}{\cD}
  \label{eq:linear hyper-posterior} \\
  & = \bar{\Sigma} \Big(\Sigma_q^{-1} \mu_q +
  \sum_{s \in \cS} (\Sigma_0 + G_s^{-1})^{-1} G_s^{-1} B_s\Big)\,, \\
  \bar{\Sigma}
  & = \condcov{\mu_*}{\cD}
  = \Big(\Sigma_q^{-1} +
  \sum_{s \in \cS} (\Sigma_0 + G_s^{-1})^{-1}\Big)^{-1}.
\end{aligned}
\end{equation}
It is helpful to view \eqref{eq:linear hyper-posterior} as a multivariate Gaussian posterior where each task is a single observation. The observation of task $s$ is the least-squares estimate of $\theta_{s, *}$ from task $s$, $G_s^{-1} B_s$, and its covariance is $\Sigma_0 + G_s^{-1}$. The tasks with many observations affect the value of $\bar{\mu}$ more, because their $G_s^{-1}$ approaches a zero matrix. In this case, $\Sigma_0 + G_s^{-1} \to \Sigma_0$. This uncertainty cannot be reduced because even $\theta_{s, *}$ is a noisy observation of $\mu_*$ with covariance $\Sigma_0$.

To complete our derivations, we only need to substitute \eqref{eq:linear task posterior} and \eqref{eq:linear hyper-posterior} into \eqref{eq:posterior task mean} and \eqref{eq:posterior task covariance}. The posterior mean of $\theta_{s, *}$ is
\begin{align*}
  \condE{\condE{\theta_{s, *}}{\mu_*, \cD_s}}{\cD}
  & = \condE{\tilde{\Sigma}_s (\Sigma_0^{-1} \mu_* + B_s)}{\cD} \\
  & = \tilde{\Sigma}_s (\Sigma_0^{-1} \condE{\mu_*}{\cD} + B_s) \\
  & = \tilde{\Sigma}_s (\Sigma_0^{-1} \bar{\mu} + B_s)\,,
\end{align*}
where we simply combine \eqref{eq:linear task posterior} and \eqref{eq:linear hyper-posterior}. Similarly, the posterior covariance of $\theta_{s, *}$ requires computing
\begin{align*}
  \condE{\condcov{\theta_{s, *}}{\mu_*, \cD_s}}{\cD}
  & = \condE{\tilde{\Sigma}_s}{\cD}
  = \tilde{\Sigma}_s\,, \\
  \condcov{\condE{\theta_{s, *}}{\mu_*, \cD_s}}{\cD}
  & = \condcov{\tilde{\Sigma}_s (\Sigma_0^{-1} \mu_* + B_s)}{\cD} \\
  & = \condcov{\tilde{\Sigma}_s \Sigma_0^{-1} \mu_*}{\cD} \\
  & = \tilde{\Sigma}_s \Sigma_0^{-1} \bar{\Sigma} \Sigma_0^{-1} \tilde{\Sigma}_s\,.
\end{align*}
Finally, the estimated mean reward and its confidence interval width are given by
\begin{equation}
\begin{aligned}
  \hat{r}_s(x, a)
  & = \phi(x, a)\T \tilde{\Sigma}_s (\Sigma_0^{-1} \bar{\mu} + B_s)\,, \\
  c_s(x, a)
  & = \alpha \sqrt{\phi(x, a)\T \hat{\Sigma}_s \phi(x, a)}\,,
  \label{eq:lcb}
\end{aligned}
\end{equation}
where $\hat{\Sigma}_s = \tilde{\Sigma}_s + \tilde{\Sigma}_s \Sigma_0^{-1} \bar{\Sigma} \Sigma_0^{-1} \tilde{\Sigma}_s$. Note that the posterior covariance $\hat{\Sigma}_s$ can be computed tractably, and exhibits the following desirable properties. First, the uncertainty over the hyper-parameter only shows up in the second term in $\bar{\Sigma}$. In addition, since $\tilde{\Sigma}_s$ appears in both terms, both terms become smaller with more observations from task $s$.

\subsection{Alternative Designs}
\label{sec:alternative designs}

A natural question to ask is what is the benefit of leveraging hierarchy in obtaining pessimistic reward estimates. To answer this question, we compare \hieropo in \cref{sec:hierarchical gaussian pessimism} to two alternative algorithms. The first one is unrealistic and assumes that $\mu_*$ is known. We call it \oracleopo. In this case, the posterior mean reward and its confidence interval width are given by
\begin{align*}
  \hat{r}_s(x, a) 
  & = \phi(x, a)\T \tilde{\Sigma}_s (\Sigma_0^{-1} \mu_* + B_s)\,, \\
  c_s(x, a)
  & = \alpha \sqrt{\phi(x, a)\T \tilde{\Sigma}_s \phi(x, a)}\,.
\end{align*}
This improves upon \eqref{eq:lcb} in two aspects. First, the estimate $\bar{\mu}$ of $\mu_*$ is replaced with the actual $\mu_*$. Second, the confidence interval width is provably smaller because
\begin{align*}
  \tilde{\Sigma}_s
  \preceq 
    \tilde{\Sigma}_s + \tilde{\Sigma}_s \Sigma_0^{-1} \bar{\Sigma} \Sigma_0^{-1} \tilde{\Sigma}_s\,.
\end{align*}
In the second algorithm, we consider what happens when we do not model the hierarchy, which we dub \flatopo. In this case, we do not attempt to model $\mu_*$ and include its uncertainty in $\theta_{s, *}$. To do so, the conditional uncertainty of $\theta_{s, *}$, represented by $\Sigma_0$, is replaced with its marginal uncertainty, represented by $\Sigma_q + \Sigma_0$. As a result, the posterior mean reward and its confidence interval width are
\begin{align*}
  \hat{r}_s(x, a) 
  & = \phi(x, a)\T \dot{\Sigma}_s ((\Sigma_q + \Sigma_0)^{-1} \mu_q + B_s)\,, \\
  c_s(x, a)
  & = \alpha \sqrt{\phi(x, a)\T \dot{\Sigma}_s \phi(x, a)}\,,
\end{align*}
where $\dot{\Sigma}_s = ((\Sigma_q + \Sigma_0)^{-1} + G_s)^{-1}$. This is worse than \eqref{eq:lcb} in two aspects. First, the prior mean $\mu_q$ of $\mu_*$ is used instead of its estimate $\bar{\mu}$. Second, as the number of tasks $m$ increases,
\begin{align*}
  \dot{\Sigma}_s
  \succeq \tilde{\Sigma}_s \Sigma_0^{-1} \bar{\Sigma} \Sigma_0^{-1} \tilde{\Sigma}_s +
  \tilde{\Sigma}_s\,,
\end{align*}
since $\bar{\Sigma}$ in \eqref{eq:linear hyper-posterior} approaches a zero matrix. Therefore, our approach should be more statistically efficient, which we prove formally in \cref{sec:multi-task analysis}.

\section{Single-Task Analysis}
\label{sec:single-task analysis}

To illustrate our error bounds, we start with a contextual bandit parameterized by $\theta_* \in \realset^d$. The mean reward of action $a \in \cA$ in context $x \in \cX$ under parameter $\theta \in \realset^d$ is $r(x, a; \theta) = \phi(x, a)\T \theta$. We assume that $\theta_* \sim \cN(\theta_0, \Sigma_0)$ and that the reward noise is $\cN(0, \sigma^2)$. Note that this is an analogous model to a single task in \cref{sec:hierarchical gaussian pessimism} where we drop indexing by $s$ to simplify notation.

The logged dataset is $\cD = \set{(X_t, A_t, Y_t)}_{t = 1}^n$, the LCB is $L(x, a) = \hat{r}(x, a) - c(x, a)$, and we output a policy $\hat{\pi} \in \Pi$ defined as $\hat{\pi}(x) = \argmax_{a \in \cA} L(x, a)$. Following the same reasoning as in the derivation of \eqref{eq:lcb}, the estimated mean reward and its confidence interval width are
\begin{align*}
  \hat{r}(x, a)
  & = \phi(x, a)\T \hat{\Sigma} (\Sigma_0^{-1} \theta_0 + B)\,, \\
  c(x, a)
  & = \alpha \sqrt{\phi(x, a)\T \hat{\Sigma} \phi(x, a)}\,,
\end{align*}
where
\begin{align*}
  \hat{\Sigma}
  & = (\Sigma_0^{-1} + G)^{-1}\,, \\
  B
  & = \sigma^{-2} \sum_{t = 1}^n \phi(X_t, A_t) Y_t\,, \\
  G
  & = \sigma^{-2} \sum_{t = 1}^n \phi(X_t, A_t) \phi(X_t, A_t)\T\,.
\end{align*}
Analogously to \cref{sec:setting}, the value of policy $\pi \in \Pi$ under parameter $\theta_*$ is $V(\pi; \theta_*) = \condE{r(X, \pi(X); \theta_*)}{\theta_*}$ and the optimal policy is $\pi_* = \argmax_{\pi \in \Pi} V(\pi; \theta_*)$. For any fixed confidence level $\delta > 0$, our goal is to learn a policy $\hat{\pi} \in \Pi$ that minimizes $\varepsilon$ in
\begin{align}
  \condprob{V(\pi_*; \theta_*) - V(\hat{\pi}; \theta_*)
  \leq \varepsilon}{\cD} \geq 1 - \delta\,.
  \label{eq:single-task suboptimality}
\end{align}
We make the following assumptions in our analysis. First, we assume that the length of feature vectors is bounded.

\begin{assumption}
\label{ass:bounded feature vectors} For any $x \in \cX$ and $a \in \cA$, the feature vector satisfies $\normw{\phi(x, a)}{2} \leq 1$.
\end{assumption}

This assumption is without loss of generality and only simplifies presentation. Second, similarly to prior works \citep{swaminathan2017off,jin2021pessimism}, we assume that the dataset $\cD$ is \say{well-explored}.

\begin{assumption}
\label{ass:precision lower bound} Let
\begin{align*}
  G_*
  = \condE{\phi(X, \pi_*(X)) \phi(X, \pi_*(X))\T}{\theta_*}\,.
\end{align*}
Then there exists $\gamma > 0$ such that $G \succeq \gamma \sigma^{-2} n G_*$ holds for any $\theta_*$.
\end{assumption}

The above assumption relates the logging policy $\pi_0$, which defines the empirical precision $G$, to the optimal policy $\pi_*$, which defines the mean precision $\sigma^{-2} n G_*$. The assumption can be loosely interpreted as follows. As $n$ increases, $G \to \sigma^{-2} n \E{\phi(X, \pi_0(X)) \phi(X, \pi_0(X))\T}$, and hence $\gamma$ can be viewed as the maximum ratio between probabilities of taking actions by $\pi_*$ and $\pi_0$ in any direction. In general, for a uniform logging policy, $\gamma = \Omega(1 / d)$ when $n$ is large. The assumption essentially allows us not to reason about the properties of $G$ when $n$ is small, which would require a concentration argument and is not essential to our result.

Note that the assumption is always satisfied by setting $\gamma = 0$. However, this setting would negate the desired scaling with sample size $n$ in our error bounds. Also note that the assumption can be weakened to be probabilistic over $\theta_*$. We do not do this to simplify the exposition.

Now we state our main claim for the single-task setting.

\begin{theorem}
\label{thm:single-task bound} Fix dataset $\cD$ and choose any $\gamma$ such that \cref{ass:precision lower bound} holds. Let $\hat{\pi}(x) = \argmax_{a \in \cA} L(x, a)$. Then for any $\delta \in (0, 1)$ and
\begin{align*}
  \alpha
  = \sqrt{5 d \log(1 / \delta)}\,,
\end{align*}
the suboptimality of $\hat{\pi} \in \Pi$ in \eqref{eq:single-task suboptimality} is bounded for
\begin{align*}
  \varepsilon
  = \alpha \sqrt{\frac{4 d}{\lambda_d(\Sigma_0^{-1}) + \gamma \sigma^{-2} n}}\,.
\end{align*}
\end{theorem}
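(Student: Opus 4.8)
The plan is to run the standard pessimism argument, but to route all of the probabilistic accounting through a single Gaussian tail event. Conditioned on $\cD$, the posterior is $\theta_* \mid \cD \sim \cN(\hat\mu, \hat\Sigma)$ with $\hat\mu = \hat\Sigma(\Sigma_0^{-1}\theta_0 + B)$, so $\hat{r}(x,a) = \phi(x,a)\T\hat\mu$ and $r(x,a;\theta_*) - \hat{r}(x,a) = \phi(x,a)\T(\theta_* - \hat\mu)$. I would define the event $\cE = \{(\theta_*-\hat\mu)\T\hat\Sigma^{-1}(\theta_*-\hat\mu) \le \alpha^2\}$, which does two jobs simultaneously. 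First, by Cauchy--Schwarz in the $\hat\Sigma^{-1}$ inner product, $\phi(x,a)\T(\theta_*-\hat\mu) \ge -\sqrt{\phi(x,a)\T\hat\Sigma\phi(x,a)}\cdot\sqrt{(\theta_*-\hat\mu)\T\hat\Sigma^{-1}(\theta_*-\hat\mu)} \ge -c(x,a)$ on $\cE$, so the LCB is valid, $L(x,a)\le r(x,a;\theta_*)$, for every $x$ and $a$ at once. Second, the same inequality controls the estimation error evaluated at $\pi_*$.

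Granting $\cE$, I would carry out the usual pessimism chain pointwise in $x$: combining validity $L(x,\hat\pi(x)) \le r(x,\hat\pi(x);\theta_*)$ with optimality $L(x,\hat\pi(x)) \ge L(x,\pi_*(x))$ yields $r(x,\pi_*(x);\theta_*) - r(x,\hat\pi(x);\theta_*) \le \phi(x,\pi_*(x))\T(\theta_*-\hat\mu) + c(x,\pi_*(x))$. Taking $\mathbb{E}_X$ and bounding the first term by Cauchy--Schwarz (pulling out the scalar $\sqrt{(\theta_*-\hat\mu)\T\hat\Sigma^{-1}(\theta_*-\hat\mu)} \le \alpha$ valid on $\cE$) collapses both terms into the same feature-norm quantity, giving $V(\pi_*;\theta_*) - V(\hat\pi;\theta_*) \le 2\alpha\,\mathbb{E}_X[\sqrt{\phi(X,\pi_*(X))\T\hat\Sigma\phi(X,\pi_*(X))}]$.

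Two pieces then remain. For $\prob{\cE}$: since $\hat\Sigma^{-1/2}(\theta_*-\hat\mu)\mid\cD \sim \cN(0,I)$, the quadratic form is $\chi^2_d$, and a Laurent--Massart tail ($\prob{\chi^2_d \ge d + 2\sqrt{dx} + 2x} \le e^{-x}$ at $x=\log(1/\delta)$) gives $\prob{\cE^c\mid\cD}\le\delta$ once $\alpha^2 = 5d\log(1/\delta)$ dominates $d + 2\sqrt{d\log(1/\delta)} + 2\log(1/\delta)$, which is where the constant $5 = 1+2+2$ originates. For the feature-norm factor, Jensen gives $\mathbb{E}_X[\sqrt{\phi\T\hat\Sigma\phi}] \le \sqrt{\trace(\hat\Sigma G_*)}$ with $G_* = \condE{\phi(X,\pi_*(X))\phi(X,\pi_*(X))\T}{\theta_*}$.

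The crux, and the step I expect to be the main obstacle, is to show $\trace(\hat\Sigma G_*) \le d/(\lambda_d(\Sigma_0^{-1}) + \gamma\sigma^{-2}n)$, since this delivers exactly the claimed $\varepsilon = 2\alpha\sqrt{d/(\lambda_d(\Sigma_0^{-1})+\gamma\sigma^{-2}n)}$. \Cref{ass:precision lower bound} gives $\hat\Sigma^{-1} = \Sigma_0^{-1} + G \succeq \Sigma_0^{-1} + \gamma\sigma^{-2}n\,G_*$, hence $\hat\Sigma \preceq (\Sigma_0^{-1} + \gamma\sigma^{-2}n\,G_*)^{-1}$, and monotonicity of $M\mapsto\trace(M G_*)$ under the Löwner order reduces the task to bounding $\trace((\Sigma_0^{-1}+\gamma\sigma^{-2}n\,G_*)^{-1}G_*)$. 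Flooring $\Sigma_0^{-1} \succeq \lambda_d(\Sigma_0^{-1})I$ makes the two matrices commute, so diagonalizing in the eigenbasis of $G_*$ (eigenvalues $\beta_i$) rewrites the trace as $\sum_i \beta_i/(\lambda_d(\Sigma_0^{-1})+\gamma\sigma^{-2}n\,\beta_i)$; each summand is increasing in $\beta_i$ with $\beta_i \le \lambda_1(G_*)\le 1$ by \cref{ass:bounded feature vectors}, hence at most $1/(\lambda_d(\Sigma_0^{-1})+\gamma\sigma^{-2}n)$, and summing $d$ terms closes the bound. The delicate points are getting the direction of the Löwner inequalities right after inversion, justifying trace monotonicity, and verifying the $\chi^2$ constant across the full range $\delta\in(0,1)$.
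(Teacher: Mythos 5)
Your proposal is correct and follows essentially the same three-step route as the paper: the Laurent--Massart $\chi^2_d$ tail on $(\theta_*-\hat\mu)\T\hat\Sigma^{-1}(\theta_*-\hat\mu)$ giving $\alpha=\sqrt{5d\log(1/\delta)}$, the pessimism chain reducing suboptimality to $2\,\condE{c(X,\pi_*(X))}{\theta_*}$, and Jensen followed by a trace/eigenvalue bound via \cref{ass:precision lower bound} and $\lambda_1(G_*)\leq 1$. The only deviation is cosmetic and in your favor: in the final step you floor $\Sigma_0^{-1}\succeq\lambda_d(\Sigma_0^{-1})I_d$ and diagonalize in the eigenbasis of $G_*$, which reaches the same bound as the paper's manipulation of $\trace\bigl((\Sigma_0^{-1}G_*^{-1}+\gamma\sigma^{-2}nI_d)^{-1}\bigr)$ while avoiding the paper's implicit assumption that $G_*$ is invertible (and your caveat about the $\chi^2$ constant is apt: $d+2\sqrt{dx}+2x\leq 5dx$ needs $x=\log(1/\delta)\geq 1$, a restriction the paper glosses over as well).
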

\begin{proof}
The claim is proved in \cref{sec:single-task proof} in three steps. First, we establish that $c(x, a)$ is a high-probability confidence interval width for $\alpha = \sqrt{5 d \log(1 / \delta)}$. Second, we show that the suboptimality of policy $\hat{\pi}$ can be bounded by $\condE{c(X, \pi_*(X))}{\theta_*}$. Finally, we combine closed forms of $c(x, a)$ with \cref{ass:precision lower bound}, and relate the statistics under the logging policy $\pi_0$ that define $c(x, a)$ with the expectation under $\pi_*$.
\end{proof}

\section{Multi-Task Analysis}
\label{sec:multi-task analysis}

Now we study our multi-task setting, where the estimated mean reward and its confidence interval width are defined in \eqref{eq:lcb}. Similarly to \cref{sec:single-task analysis}, this analysis is Bayesian and we are concerned with the distribution of model parameters conditioned on $\cD$. We fix the task and derive an error bound for a single $s \in \cS$. In \cref{sec:discussion}, we discuss how to extend our bound to other performance metrics, such as the error over all tasks.

To derive the bound in \eqref{eq:multi-task suboptimality}, we make assumptions analogous to \cref{sec:single-task analysis}. First, and without loss of generality, we assume that the length of feature vectors is bounded (\cref{ass:bounded feature vectors}). Second, we assume that the dataset $\cD$ is \say{well-explored} for all tasks.

\begin{assumption}
\label{ass:multi-task precision lower bound} Let
\begin{align*}
  G_s
  = \sigma^{-2} \sum_{t = 1}^n \I{S_t = s} \phi(X_t, A_t) \phi(X_t, A_t)\T
\end{align*}
be the empirical precision associated with task $s$ and $n_s = \sum_{t = 1}^n \I{S_t = s}$ be the number of interactions with that task. Let
\begin{align*}
  G_{s, *}
  = \condE{\phi(X, \pi_{s, *}(X)) \phi(X, \pi_{s, *}(X))\T}{\theta_{s, *}}\,.
\end{align*}
Then there exists $\gamma > 0$ such that $G_s \succeq \gamma \sigma^{-2} n_s G_{s, *}$ holds for any $\theta_{s, *}$ in any task $s \in \cS$.
\end{assumption}

This assumption is essentially \cref{ass:precision lower bound} applied to all tasks. In general, for a uniform logging policy, $\gamma = \Omega(1 / d)$ when $n_s$ is large for all $s \in \cS$. Therefore, we do not think that the assumption is particularly strong. If needed, the assumption could be weaken to be probabilistic, as discussed after \cref{ass:precision lower bound}.

We also consider an additional assumption that sharpens the bound in \cref{thm:multi-task bound}.

\begin{assumption}
\label{ass:mab} For any $x \in \cX$ and $a \in \cA$, the feature vector $\phi(x, a)$ has at most one non-zero entry. Moreover, both $\Sigma_q$ and $\Sigma_0$ are diagonal.
\end{assumption}

Note that \cref{ass:mab} encompasses the multi-arm bandit case, where $\phi(x, a) \in \realset^{|\cX| |\cA|}$ and is an indicator vector for each context-action pair. Our main technical result is presented below.

\begin{theorem}
\label{thm:multi-task bound} Fix dataset $\cD$ and choose any $\gamma$ such that \cref{ass:multi-task precision lower bound} holds. Take $\hat{\pi}$ computed by \hieropo. Then for any $\delta \in (0, 1)$ and
\begin{align*}
  \alpha
  = \sqrt{5 d \log(1 / \delta)}\,,
\end{align*}
the suboptimality of $\hat{\pi}_s \in \Pi$ in \eqref{eq:multi-task suboptimality} is bounded for
\begin{align*}
  \varepsilon
  = {} & \underbrace{\alpha \sqrt{\frac{4 d}{\lambda_d(\Sigma_0^{-1}) +
  \gamma \sigma^{-2} n_s}}}_{\text{Task term}} + {} \\
  & \underbrace{\alpha \sqrt{\frac{4 d}{\lambda_d(\Sigma_q^{-1}) +
  \sum_{z \in \cS} \frac{1}{\lambda_1(\Sigma_0) +
  \gamma^{-1} \sigma^2 \lambda_1(G_{z, *}^{-1}) n_z^{-1}}}}}_{\text{Hyper-parameter term}}\,.
\end{align*}
Also, under \cref{ass:mab},
\begin{align*}
  \varepsilon
  = {} & \underbrace{\alpha \sqrt{\frac{4 d}{\lambda_d(\Sigma_0^{-1}) +
  \gamma \sigma^{-2} n_s}}}_{\text{Task term}} + {} \\
  & \underbrace{\alpha \sqrt{\frac{4 d}{\lambda_d(\Sigma_q^{-1}) +
  \sum_{z \in \cS} \frac{1}{\lambda_1(\Sigma_0) +
  \gamma^{-1} \sigma^2 n_z^{-1}}}}}_{\text{Hyper-parameter term}}\,.
\end{align*}
\end{theorem}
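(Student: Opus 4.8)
The plan is to mirror the three-step argument behind \cref{thm:single-task bound}, but to split the posterior covariance $\hat\Sigma_s$ in \eqref{eq:lcb} into its two components and bound each separately. First I would establish validity of the confidence width. Since the model is linear Gaussian, $\theta_{s,*}\mid\cD$ is Gaussian with mean $\tilde\Sigma_s(\Sigma_0^{-1}\bar\mu+B_s)$ and covariance $\hat\Sigma_s$, so for every fixed $(x,a)$ the standardized error $\phi(x,a)\T(\theta_{s,*}-\condE{\theta_{s,*}}{\cD})/\sqrt{\phi(x,a)\T\hat\Sigma_s\phi(x,a)}$ is standard normal. Taking the supremum over all feature directions turns this into $\|\hat\Sigma_s^{-1/2}(\theta_{s,*}-\condE{\theta_{s,*}}{\cD})\|_2^2\sim\chi^2_d$, and a $\chi^2$ tail bound shows that for the stated $\alpha=\sqrt{5 d\log(1/\delta)}$ the event $\cE=\set{\abs{r(x,a;\theta_{s,*})-\hat r_s(x,a)}\le c_s(x,a)\ \forall x,a}$ has conditional probability at least $1-\delta$ given $\cD$. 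On $\cE$, the standard pessimism argument (using $L_s\le r$ and that $\hat\pi_s$ maximizes $L_s$) yields $V(\pi_{s,*};\theta_{s,*})-V(\hat\pi_s;\theta_{s,*})\le 2\condE{c_s(X,\pi_{s,*}(X))}{\theta_{s,*}}$, and Jensen's inequality gives $\condE{c_s(X,\pi_{s,*}(X))}{\theta_{s,*}}\le\alpha\sqrt{\trace(\hat\Sigma_s G_{s,*})}$. Because \cref{ass:multi-task precision lower bound} holds for every $\theta_{s,*}$, the resulting deterministic bound is uniform in $\theta_{s,*}$ and transfers directly to \eqref{eq:multi-task suboptimality}.

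The core is then bounding $\trace(\hat\Sigma_s G_{s,*})$. Substituting $\hat\Sigma_s=\tilde\Sigma_s+\tilde\Sigma_s\Sigma_0^{-1}\bar\Sigma\Sigma_0^{-1}\tilde\Sigma_s$ and using $\sqrt{a+b}\le\sqrt a+\sqrt b$ splits the error into a task term $\alpha\sqrt{4\trace(\tilde\Sigma_s G_{s,*})}$ and a hyper-parameter term $\alpha\sqrt{4\trace(\tilde\Sigma_s\Sigma_0^{-1}\bar\Sigma\Sigma_0^{-1}\tilde\Sigma_s G_{s,*})}$. The task term is identical to the single-task quantity with $n$ replaced by $n_s$: using $\tilde\Sigma_s=(\Sigma_0^{-1}+G_s)^{-1}$, $G_s\succeq\gamma\sigma^{-2}n_s G_{s,*}$, and $\trace(G_{s,*})\le 1$ from \cref{ass:bounded feature vectors}, a coordinatewise/eigenvalue bound gives $\trace(\tilde\Sigma_s G_{s,*})\le d/(\lambda_d(\Sigma_0^{-1})+\gamma\sigma^{-2}n_s)$, reproducing the task term exactly as in \cref{thm:single-task bound}.

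For the hyper-parameter term I would bound it by $\lambda_1(\bar\Sigma)\,\trace(G_{s,*}\tilde\Sigma_s\Sigma_0^{-2}\tilde\Sigma_s)$ using $\bar\Sigma\preceq\lambda_1(\bar\Sigma)I$ and cyclicity of the trace, controlling the sandwiched factor by $d$ via $\tilde\Sigma_s\preceq\Sigma_0$ together with \cref{ass:bounded feature vectors}. The new ingredient is a lower bound on $\lambda_d(\bar\Sigma^{-1})=\lambda_d(\Sigma_q^{-1}+\sum_{z}(\Sigma_0+G_z^{-1})^{-1})$ from \eqref{eq:linear hyper-posterior}. By Weyl's inequality this is at least $\lambda_d(\Sigma_q^{-1})+\lambda_d(\sum_z(\Sigma_0+G_z^{-1})^{-1})$, and applying the inverse-form precision bound $G_z^{-1}\preceq\gamma^{-1}\sigma^2 n_z^{-1}G_{z,*}^{-1}$ to each summand gives $(\Sigma_0+G_z^{-1})^{-1}\succeq(\lambda_1(\Sigma_0)+\gamma^{-1}\sigma^2\lambda_1(G_{z,*}^{-1})n_z^{-1})^{-1}I$, so the sum over tasks contributes exactly the denominator in the hyper-parameter term. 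Inverting yields $\lambda_1(\bar\Sigma)\le(\lambda_d(\Sigma_q^{-1})+\sum_z(\lambda_1(\Sigma_0)+\gamma^{-1}\sigma^2\lambda_1(G_{z,*}^{-1})n_z^{-1})^{-1})^{-1}$, finishing the general bound. Under \cref{ass:mab} all feature covariances, $\Sigma_0$, and $\Sigma_q$ are diagonal, so every matrix here commutes and the sandwiched factors reduce to diagonal entries bounded by one; this lets me replace $\lambda_1(\Sigma_0+G_z^{-1})$ by $\lambda_1(\Sigma_0)+\gamma^{-1}\sigma^2 n_z^{-1}$ directly, dropping the $\lambda_1(G_{z,*}^{-1})$ factor and giving the sharper diagonal bound.

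I expect the hyper-parameter term to be the main obstacle, for three reasons. First, controlling the non-commuting product $\tilde\Sigma_s\Sigma_0^{-1}\bar\Sigma\Sigma_0^{-1}\tilde\Sigma_s$ without losing the $n_z$-scaling is delicate, since $\tilde\Sigma_s\Sigma_0^{-2}\tilde\Sigma_s\preceq I$ need not hold when $\Sigma_0$ and $G_s$ fail to commute, and the operator-norm factor $\lambda_1(\bar\Sigma)$ must be combined carefully with the sandwiched trace. Second, the inverse-form bound $G_z^{-1}\preceq\gamma^{-1}\sigma^2 n_z^{-1}G_{z,*}^{-1}$ requires $G_{z,*}$ to be invertible; this is exactly where \cref{ass:mab} is needed, to avoid a spurious $\lambda_1(G_{z,*}^{-1})$ blow-up when $G_{z,*}$ is rank-deficient. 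Third, I must ensure all spectral bounds hold uniformly over $\theta_{s,*}$, so that the deterministic bound on $2\condE{c_s(X,\pi_{s,*}(X))}{\theta_{s,*}}$ can be combined with the probability-$(1-\delta)$ event $\cE$ to conclude \eqref{eq:multi-task suboptimality}.
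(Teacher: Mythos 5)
Your overall skeleton and your general-case bound track the paper's proof of \cref{thm:multi-task bound} almost exactly: the $\chi^2$ tail bound for $\theta_{s,*} \mid \cD$, the pessimism argument yielding $2\,\condE{c_s(X, \pi_{s,*}(X))}{\theta_{s,*}}$, Jensen, the split of $\hat{\Sigma}_s$ into $\tilde{\Sigma}_s$ plus $\tilde{\Sigma}_s \Sigma_0^{-1} \bar{\Sigma} \Sigma_0^{-1} \tilde{\Sigma}_s$, extracting $\lambda_1(\bar{\Sigma})$, Weyl's inequality on $\bar{\Sigma}^{-1}$, and the per-task bound $\lambda_1(G_z^{-1}) \leq \gamma^{-1} \sigma^2 n_z^{-1} \lambda_1(G_{z,*}^{-1})$ are all the same moves as in the paper. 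Your worry about the sandwiched factor is well placed: $\tilde{\Sigma}_s \Sigma_0^{-2} \tilde{\Sigma}_s \preceq I$ is an operator-norm statement about the non-symmetric matrix $\Sigma_0^{-1}\tilde{\Sigma}_s$ and can fail when $\Sigma_0$ and $G_s$ do not commute; the paper instead works with the largest \emph{eigenvalue} of the chain of PSD products, using $\lambda_1(\tilde{\Sigma}_s \Sigma_0^{-1}) = \lambda_1(\Sigma_0^{-1/2} \tilde{\Sigma}_s \Sigma_0^{-1/2}) \leq 1$, which is how you should organize that step rather than through $\bar{\Sigma} \preceq \lambda_1(\bar{\Sigma}) I$ inside the trace.

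The genuine gap is in the \cref{ass:mab} sharpening. Your stated mechanism --- diagonality makes the sandwiched factors at most one, so you can \say{replace $\lambda_1(\Sigma_0 + G_z^{-1})$ by $\lambda_1(\Sigma_0) + \gamma^{-1} \sigma^2 n_z^{-1}$ directly} --- amounts to claiming $\lambda_1(G_z^{-1}) \leq \gamma^{-1} \sigma^2 n_z^{-1}$, i.e., $\lambda_d(G_{z,*}) \geq 1$. That is false precisely in the multi-armed bandit case you invoke: there $G_{z,*}$ is diagonal with entries equal to the probabilities of context--optimal-action pairs under $\pi_{z,*}$, so most entries are small or exactly zero, and $\lambda_1(G_{z,*}^{-1})$ can be arbitrarily large; diagonality alone does not remove it. Your diagnosis that \cref{ass:mab} is needed because $G_z^{-1} \preceq \gamma^{-1} \sigma^2 n_z^{-1} G_{z,*}^{-1}$ requires invertibility is also off --- the general bound already tolerates singular $G_{z,*}$, since a huge $\lambda_1(G_{z,*}^{-1})$ merely makes that task's contribution to the denominator vanish. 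The paper's actual mechanism is a cancellation your proposal never invokes: by commutativity, $\lambda_1(G_{s,*} \tilde{\Sigma}_s \Sigma_0^{-1} \bar{\Sigma} \Sigma_0^{-1} \tilde{\Sigma}_s) \leq \lambda_1(G_{s,*} \bar{\Sigma}) = \lambda_d^{-1}\bigl(\Sigma_q^{-1} G_{s,*}^{-1} + \sum_{z \in \cS} (G_{s,*} \Sigma_0 + G_{s,*} G_z^{-1})^{-1}\bigr)$, so the matrix $G_{s,*}$ of the \emph{fixed} task $s$ is carried into every summand. Then \cref{ass:multi-task precision lower bound} is applied in task $z$ at the parameter value $\theta_{z,*} = \theta_{s,*}$ (the assumption holds for any task parameter), giving $G_z \succeq \gamma \sigma^{-2} n_z G_{s,*}$ and hence $\lambda_1(G_{s,*} G_z^{-1}) \leq \gamma^{-1} \sigma^2 n_z^{-1}$: the $G_{s,*}$ cancels against $G_z^{-1}$ coordinatewise, which is exactly what eliminates the $\lambda_1(G_{z,*}^{-1})$ factor. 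Without this cross-task instantiation of the assumption, your diagonal argument cannot reach the second bound in the theorem.
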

\begin{proof}
The claim is proved in \cref{sec:multi-task proof}, in the same three steps as \cref{thm:single-task bound}. The only difference is in the definitions of $c(x, a)$ and policies, and that we use \cref{ass:multi-task precision lower bound} instead of \cref{ass:precision lower bound}. This highlights the generality of our proof techniques and shows that they could be applicable to other graphical model structures.
\end{proof}

\subsection{Discussion}
\label{sec:discussion}

Our main technical result, an error bound on the suboptimality of policies learned by \hieropo, is presented in \cref{thm:multi-task bound}. The bound is Bayesian, meaning that it is proved for the distribution of true model parameters conditioned on logged dataset $\cD$. The bound has two terms. The former  captures the error in estimating the task parameter $\theta_{s, *}$ conditioned on known hyper-parameter $\mu_*$ and is analogous to \cref{thm:single-task bound}. We call it the \emph{task term}. The latter captures the error in estimating the hyper-parameter $\mu_*$ and we call it the \emph{hyper-parameter term}.

The task term scales with all quantities of interest as expected. First, it is $O(d \sqrt{\log(1 / \delta)})$, where $d$ is the number of task parameters and $\delta$ is the probability that the bound fails. This dependence is standard in linear bandit analyses with an infinite number of contexts \citep{abbasi-yadkori11improved,agrawal13thompson,abeille17linear}. Second, the task term decreases with the number of observations $n_s$ at the rate of $O(1 / \sqrt{n_s})$. Since $\lambda_d(\Sigma_0^{-1})$ can be viewed as the minimum number of prior pseudo-observations in any direction in $\realset^d$, the task term decreases with a more informative prior. Finally, the task term decreases when the observation noise $\sigma$ decreases, and the similarity of the logging and optimal policies $\gamma$ increases (\cref{ass:multi-task precision lower bound}).

The hyper-parameter term mimics the task-term scaling at the hyper-parameter level. In particular, the minimum number of prior pseudo-observations in any direction in $\realset^d$ becomes $\lambda_d(\Sigma_q^{-1})$ and each task becomes an observation, which is reflected by the sum over all tasks $z$. The hyper-parameter term decreases as the number of observations $n_z$ in any task $z$ increases, the maximum width of the task prior $\sqrt{\lambda_1(\Sigma_0)}$ decreases, noise $\sigma$ decreases, and the similarity between logging and optimal policies $\gamma$ increases.

To show that \hieropo leverages the structure of our problem, we compare its error bound to two baselines from \cref{sec:alternative designs}: \oracleopo and \flatopo. \oracleopo is an oracle estimator that knows $\mu_*$, meaning that it has more information than \hieropo. Its error is bounded in \cref{thm:single-task bound} and is always lower than that of \hieropo, as the error bound in \cref{thm:single-task bound} is essentially only the first term in \cref{thm:multi-task bound}. The second baseline, \flatopo, does not know $\mu_*$ and treats each task estimation problem independently. This approach can be viewed as \oracleopo where the task covariance $\Sigma_0$ is replaced by $\Sigma_q + \Sigma_0$, to account for the additional uncertainty due to not knowing $\mu_*$. The resulting error bound is
\begin{align*}
  \alpha \sqrt{\frac{4 d}{\lambda_d((\Sigma_q + \Sigma_0)^{-1}) +
  \gamma \sigma^{-2} n_s}}\,,
\end{align*}
and is always higher than the task term in \cref{thm:multi-task bound}. In addition, the hyper-parameter term in \cref{thm:multi-task bound} approaches zero as the number of tasks increases, and thus \hieropo is provably better in this setting of our interest.

The error bound in \cref{thm:multi-task bound} is proved for one fixed task $s \in \cS$. This decision was taken deliberately because other error bounds can be easily derived from this result. For instance, to get a bound for all tasks, we only need a union bound for the concentration of all $\theta_{s, *}$. Thus the bound in \cref{thm:multi-task bound} holds jointly for all $s \in \cS$ with probability at least $1 - m \delta$. Moreover, the same bound would essentially hold for any new task sampled from the hyper-prior. The reason is that the estimated hyper-parameter distribution, which affects the hyper-parameter term in \cref{thm:multi-task bound}, separates all other tasks from the evaluated one.

\section{Related Work}
\label{sec:related work}

\paragraph{Off-policy optimization.} In off-policy optimization, logged data collected by a deployed
policy is used to learn better policies \citep{li2010contextual}, and the agent does not interact with the environment directly. Off-policy learning can be achieved using model-free or model-based techniques. A popular model-free approach is empirical risk minimization with IPS-based estimators to account for the bias in logged data \citep{joachims2017unbiased,bottou2013counterfactual,swaminathan2015counterfactual,swaminathan2017off}. Model-based methods \citep{jeunen2021pessimistic} on the other hand learn a reward regression model for specific context-action pairs, which is then used to derive an optimal policy. Model-free methods tend to have a high variance while model-based methods tend to have a high bias unless explicitly corrected. Our approach is model based since we learn a hierarchical linear reward model.

\paragraph{Offline reinforcement learning.} The principle of pessimism has been explored in offline reinforcement learning in several works \citep{buckman2020importance,jin2021pessimism}. In particular, \citet{jin2021pessimism} show that pessimistic value iteration is minimax optimal in linear MDPs. The multi-task offline setting studied in this work was also studied by \citet{lazaric2010bayesian}. They propose an expectation-maximization algorithm but do not prove any error bounds. On the other hand, we consider a simpler setting of contextual bandits and derive error bounds that show improvemets due to using the multi-task structure.

\paragraph{Online learning.} Off-policy methods learn from data collected by a different policy. In contrast, online algorithms learn from data they collect, and need to balance exploration with exploitation. Two popular exploration techniques are upper confidence bounds (UCBs) \citep{auer02finitetime} and posterior sampling \citep{thompson33likelihood}, and they have been applied to linear reward models \citep{dani08stochastic,abbasi-yadkori11improved, chu11contextual,agrawal13thompson}. Bandit algorithms for hierarchical models have also been studied extensively \citet{bastani19meta,kveton21metathompson,basu21noregrets,simchowitz21bayesian,wan21metadatabased,hong22hierarchical,peleg22metalearning,wan22towards}. Perhaps surprisingly, all of these are based on posterior sampling. Our marginal posterior derivations in \cref{sec:hierarchical gaussian pessimism} can be used to derive their UCB counterparts.

\section{Experiments}
\label{sec:experiments}

In this section, we empirically compare \hieropo to baselines \oracleopo and \flatopo (\cref{sec:alternative designs}). All algorithms are implemented exactly as described in \cref{sec:algorithm} with $\alpha = 0.1$, which led to good performance in our initial experiments. Overall we aim to show that hierarchy can greatly improve the efficiency of off-policy algorithms.

\subsection{Synthetic Multi-Task Bandit}

\begin{figure*}[t]
  \centering
  \begin{minipage}{0.32\textwidth}
    \includegraphics[width=\linewidth]{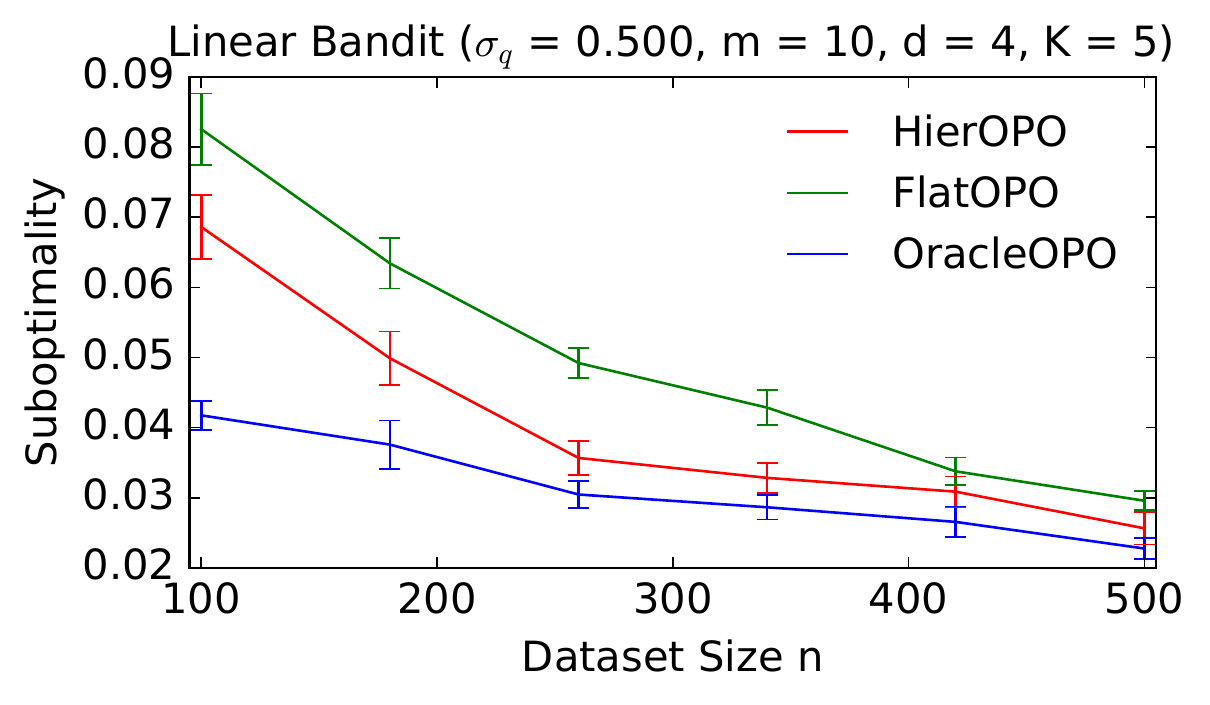}
  \end{minipage}
  \begin{minipage}{0.32\textwidth}
    \includegraphics[width=\linewidth]{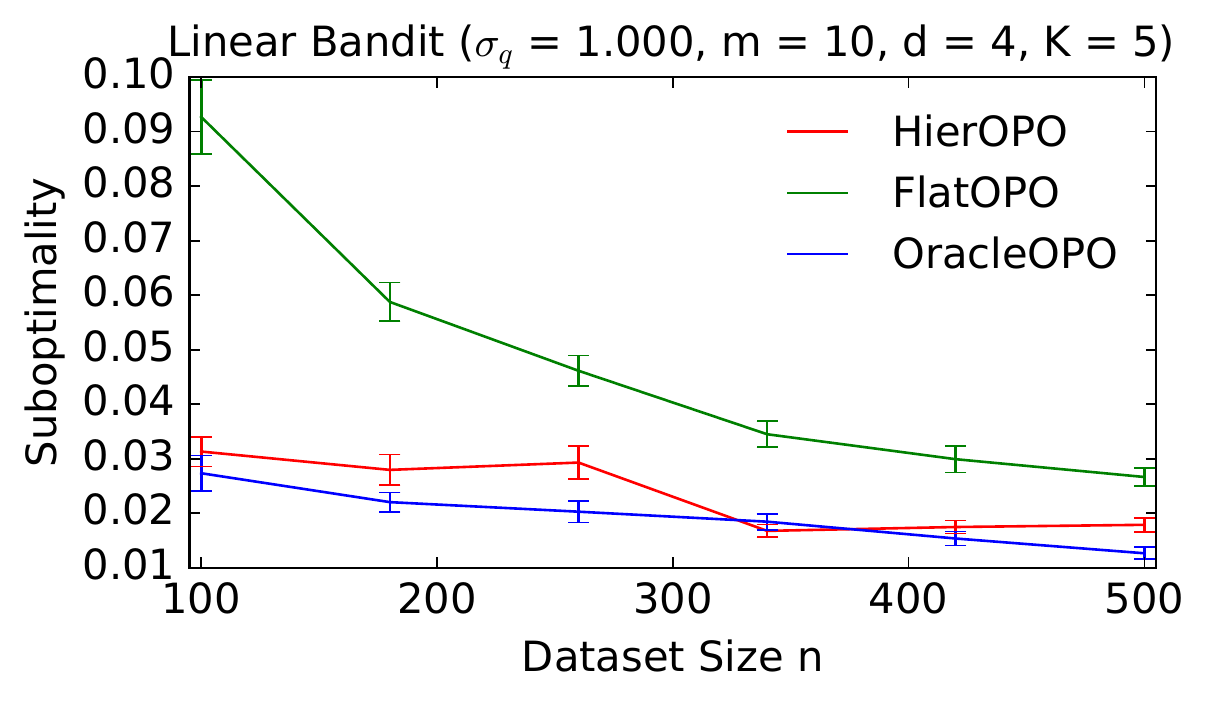}
  \end{minipage}
  \begin{minipage}{0.32\textwidth}
    \includegraphics[width=\linewidth]{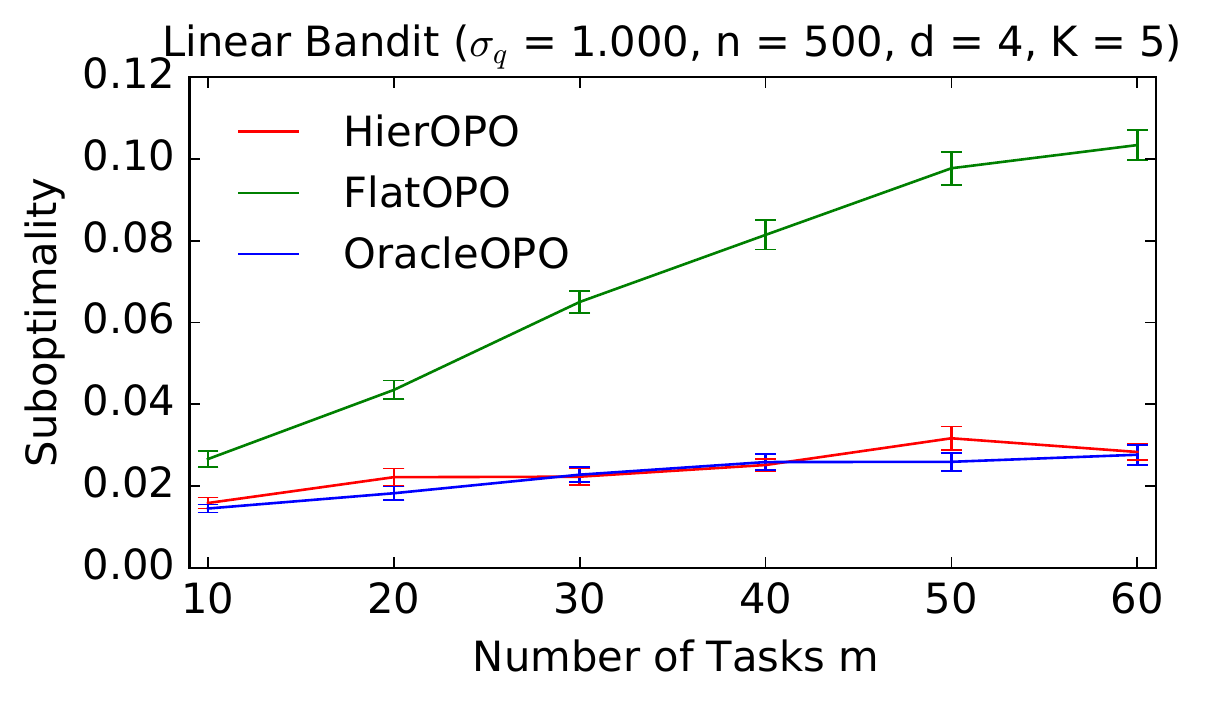}
  \end{minipage}
  \caption{Evaluation of off-policy algorithms on the synthetic multi-task bandit problem. In the left and middle plots, we vary the dataset size $n$ for small $\sigma_q = 0.5$ and large $\sigma_q = 1.0$. In the right plot, we vary the number of tasks $m$.}
  \label{fig:synthetic}
\vspace{-0.1in}
\end{figure*}

We first experiment with a synthetic multi-task bandit defined as follows. We set dimension as $d = 4$, number of actions as $K = 5$, and each context-action pair is a random vector $\phi(x, a) \in [-0.5, 0.5]^d$. The reward distribution for task $s$ is $\mathcal{N}(\phi(x, a)\T \theta_{s, *}, \sigma^2)$ with noise $\sigma = 0.5$.

The hierarchical model is defined as follows. The hyper-prior is $\cN(\mathbf{0}, \Sigma_q)$ with $\Sigma_q = \sigma_q^2 I_d$, the task covariance is $\Sigma_0 = \sigma_0^2 I_d$, and the reward noise is $\sigma = 0.5$. We choose $\sigma_q \in \set{0.5, 1}$ and $\sigma_0 = 0.5$. We expect more benefits of learning $\mu_*$ when $\sigma_q > \sigma_0$, as the uncertainty of the hyper-parameter is higher. The model parameters are generated as follows. At the beginning of each run, $\mu_* \sim \cN(\mathbf{0}, \Sigma_q)$. After that, each task parameter is sampled i.i.d.\ as $\theta_{s, *} \sim \cN(\mu_*, \Sigma_0)$. We initially set the number of tasks to $m = 10$ and the size of the logged dataset to $n = 500$. The logged dataset $\cD$ is generated as follows. For each interaction $t \in [n]$, we sample one of $m$ tasks uniformly at random, take an action uniformly at random, and sample a reward from the reward distribution.

In our experiments, we vary either dataset size $n$ or the number of tasks $m$ while keeping the other fixed. In \cref{fig:synthetic}, we show the mean and standard error of the suboptimality of each algorithm averaged over $30$ random runs, where the model and dataset in each run are generated as described earlier. As expected, \hieropo outperforms \flatopo and is close to \oracleopo. The improvement is greater when the uncertainty in the hyper-parameter $\sigma_q$ is higher. We also see that the gap is most noticeable in the limited data regime, where $n$ is small or $m$ is large, with only a small number of observations per task.

\subsection{Multi-User Recommendation}
\label{sec:multi-user recommendation}

\begin{figure}[t]
  \centering
  \includegraphics[width=.75\linewidth]{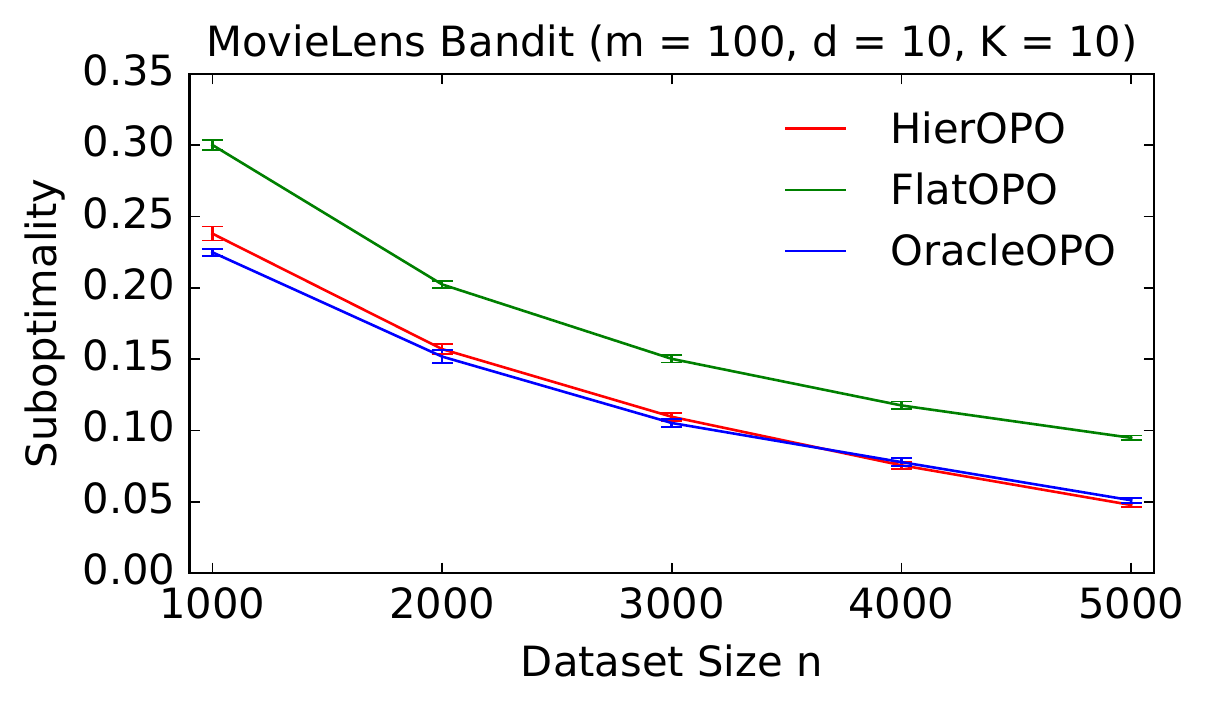}
  \caption{Evaluation of off-policy algorithms on the multi-user movie recommendation problem in \cref{sec:multi-user recommendation}.}
  \label{fig:movielens}
\vspace{-0.1in}
\end{figure}

Now we consider a multi-user recommendation application. We fit a multi-task contextual bandit from the MovieLens 1M dataset \citep{movielens}, with $1$ million ratings from $6\,040$ users for $3\,883$ movies, as follows. First, we complete the sparse rating matrix $M$ using alternating least squares \citep{pmf} with rank $d = 10$. This rank is high enough to yield a low prediction error, but small enough to avoid overfitting. The learned factorization is $M = U V\T$. User $i$ and movie $j$ correspond to rows $U_i$ and $V_j$, respectively, in the learned latent factors. Each task corresponds to some user $i$. In each round, context $x$ consists of $K = 10$ movies chosen uniformly at random. The reward distribution for recommending movie $j$ to user $i$ is $\mathcal{N}(V_j\T U_i, \sigma^2)$ with $\sigma = 0.759$ estimated from data.

To estimate the hierarchical model in \cref{sec:hierarchical gaussian pessimism}, we cluster the user latent factors. Specifically, we learn a \emph{Gaussian mixture model (GMM)} for $k = 7$ from rows of $U$, where we choose the smallest $k$ that still achieves low variance \citep{bishop06}. We estimate the hyper-prior parameters $\mu_q$ and $\Sigma_q$ using the mean and covariance, respectively, of the cluster centers. Then we select the cluster with most users, and set $\mu_*$ and $\Sigma_0$ to its center and covariance estimated by the GMM. The tasks are the users in this same cluster, to ensure that all are related to one another through the hyper-parameter. We wanted to stress that the GMM is only used to estimate parameters for the off-policy algorithms. The task parameters $U_i$ are generated by matrix factorization. This is to ensure that our setup is as realistic as possible.

We keep the number of tasks fixed at $m = 100$ and vary dataset size $n$. The tasks are users from the largest cluster, sampled uniformly at random. When generating the logged dataset, we sample one task uniformly at random, take a random action in it, and record its random reward. In \cref{fig:movielens}, we show the mean and standard error of the suboptimality of each algorithm averaged over $10$ random runs, where each run consists of choosing $m$ users, generating a dataset of size $n$, and running each algorithm on that dataset. We observe that \hieropo achieves good performance, close to \oracleopo, using much less data than \flatopo. This clearly demonstrates the benefit of hierarchies for statistically-efficient off-policy learning. The hierarchies are beneficial even if they are estimated from data and not exactly known.

\section{Conclusions}
\label{sec:conclusions}

In this work, we propose hierarchical off-policy optimization (\hieropo), a general off-policy algorithm for solving similar contextual bandit tasks related through a hierarchy. Our algorithm leverages the hierarchical structure to learn tighter, and thus more sample efficient, lower confidence bounds and then optimizes a policy with respect to them. We prove Bayesian suboptimality bounds for our policies, which decrease as the hyper-prior and task prior widths decrease. Thus the bounds improve with more informative priors. Finally, we empirically demonstrate the effectiveness of modeling hierarchies.

To the best of our knowledge, our work is the first to propose a practical and analyzable algorithm for off-policy learning with hierarchical Bayesian models. Because of this, there are many possible future directions to improve the generality and applicability of our approach. First, some applications may require more complex graphical models than two-level hierarchies. Second, the logged dataset may not contain labels of tasks, if different tasks cannot be as easily distinguished as users; or fully cover all possible tasks that can appear online. Extending our approach to learning policies from such limited datasets is another important avenue for future work.

\bibliographystyle{abbrvnat}
\bibliography{Brano,references}

\begin{thebibliography}{44}
\providecommand{\natexlab}[1]{#1}
\providecommand{\url}[1]{\texttt{#1}}
\expandafter\ifx\csname urlstyle\endcsname\relax
  \providecommand{\doi}[1]{doi: #1}\else
  \providecommand{\doi}{doi: \begingroup \urlstyle{rm}\Url}\fi

\bibitem[Abbasi-Yadkori et~al.(2011)Abbasi-Yadkori, Pal, and
  Szepesvari]{abbasi-yadkori11improved}
Y.~Abbasi-Yadkori, D.~Pal, and C.~Szepesvari.
\newblock Improved algorithms for linear stochastic bandits.
\newblock In \emph{Advances in Neural Information Processing Systems 24}, pages
  2312--2320, 2011.

\bibitem[Abeille and Lazaric(2017)]{abeille17linear}
M.~Abeille and A.~Lazaric.
\newblock Linear {Thompson} sampling revisited.
\newblock In \emph{Proceedings of the 20th International Conference on
  Artificial Intelligence and Statistics}, 2017.

\bibitem[Agrawal and Goyal(2012)]{agrawal12analysis}
S.~Agrawal and N.~Goyal.
\newblock Analysis of {Thompson} sampling for the multi-armed bandit problem.
\newblock In \emph{Proceeding of the 25th Annual Conference on Learning
  Theory}, pages 39.1--39.26, 2012.

\bibitem[Agrawal and Goyal(2013)]{agrawal13thompson}
S.~Agrawal and N.~Goyal.
\newblock Thompson sampling for contextual bandits with linear payoffs.
\newblock In \emph{Proceedings of the 30th International Conference on Machine
  Learning}, pages 127--135, 2013.

\bibitem[Auer et~al.(2002)Auer, Cesa-Bianchi, and Fischer]{auer02finitetime}
P.~Auer, N.~Cesa-Bianchi, and P.~Fischer.
\newblock Finite-time analysis of the multiarmed bandit problem.
\newblock \emph{Machine Learning}, 47:\penalty0 235--256, 2002.

\bibitem[Azar et~al.(2013)Azar, Lazaric, and Brunskill]{azar13sequential}
M.~G. Azar, A.~Lazaric, and E.~Brunskill.
\newblock Sequential transfer in multi-armed bandit with finite set of models.
\newblock In \emph{Advances in Neural Information Processing Systems 26}, pages
  2220--2228, 2013.

\bibitem[Bastani et~al.(2019)Bastani, Simchi-Levi, and Zhu]{bastani19meta}
H.~Bastani, D.~Simchi-Levi, and R.~Zhu.
\newblock Meta dynamic pricing: Transfer learning across experiments.
\newblock \emph{CoRR}, abs/1902.10918, 2019.
\newblock URL \url{https://arxiv.org/abs/1902.10918}.

\bibitem[Basu et~al.(2021)Basu, Kveton, Zaheer, and
  Szepesvari]{basu21noregrets}
S.~Basu, B.~Kveton, M.~Zaheer, and C.~Szepesvari.
\newblock No regrets for learning the prior in bandits.
\newblock In \emph{Advances in Neural Information Processing Systems 34}, 2021.

\bibitem[Bishop(2006)]{bishop06}
C.~M. Bishop.
\newblock \emph{Pattern Recognition and Machine Learning}.
\newblock Springer, New York, NY, 2006.

\bibitem[Bottou et~al.(2013)Bottou, Peters, Qui{\~n}onero-Candela, Charles,
  Chickering, Portugaly, Ray, Simard, and Snelson]{bottou2013counterfactual}
L.~Bottou, J.~Peters, J.~Qui{\~n}onero-Candela, D.~X. Charles, D.~M.
  Chickering, E.~Portugaly, D.~Ray, P.~Simard, and E.~Snelson.
\newblock Counterfactual reasoning and learning systems: The example of
  computational advertising.
\newblock \emph{Journal of Machine Learning Research}, 14\penalty0 (11), 2013.

\bibitem[Buckman et~al.(2020)Buckman, Gelada, and
  Bellemare]{buckman2020importance}
J.~Buckman, C.~Gelada, and M.~G. Bellemare.
\newblock The importance of pessimism in fixed-dataset policy optimization.
\newblock \emph{arXiv preprint arXiv:2009.06799}, 2020.

\bibitem[Cella et~al.(2020)Cella, Lazaric, and Pontil]{cella20metalearning}
L.~Cella, A.~Lazaric, and M.~Pontil.
\newblock Meta-learning with stochastic linear bandits.
\newblock In \emph{Proceedings of the 37th International Conference on Machine
  Learning}, 2020.

\bibitem[Chapelle and Li(2012)]{chapelle11empirical}
O.~Chapelle and L.~Li.
\newblock An empirical evaluation of {Thompson} sampling.
\newblock In \emph{Advances in Neural Information Processing Systems 24}, pages
  2249--2257, 2012.

\bibitem[Chu et~al.(2011)Chu, Li, Reyzin, and Schapire]{chu11contextual}
W.~Chu, L.~Li, L.~Reyzin, and R.~Schapire.
\newblock Contextual bandits with linear payoff functions.
\newblock In \emph{Proceedings of the 14th International Conference on
  Artificial Intelligence and Statistics}, pages 208--214, 2011.

\bibitem[Dani et~al.(2008)Dani, Hayes, and Kakade]{dani08stochastic}
V.~Dani, T.~Hayes, and S.~Kakade.
\newblock Stochastic linear optimization under bandit feedback.
\newblock In \emph{Proceedings of the 21st Annual Conference on Learning
  Theory}, pages 355--366, 2008.

\bibitem[Deshmukh et~al.(2017)Deshmukh, Dogan, and Scott]{deshmukh17multitask}
A.~A. Deshmukh, U.~Dogan, and C.~Scott.
\newblock Multi-task learning for contextual bandits.
\newblock In \emph{Advances in Neural Information Processing Systems 30}, pages
  4848--4856, 2017.

\bibitem[Doucet et~al.(2001)Doucet, de~Freitas, and Gordon]{doucet01sequential}
A.~Doucet, N.~de~Freitas, and N.~Gordon.
\newblock \emph{Sequential {Monte Carlo} Methods in Practice}.
\newblock Springer, New York, NY, 2001.

\bibitem[Dudik et~al.(2014)Dudik, Erhan, Langford, and Li]{dudik14doubly}
M.~Dudik, D.~Erhan, J.~Langford, and L.~Li.
\newblock Doubly robust policy evaluation and optimization.
\newblock \emph{Statistical Science}, 29\penalty0 (4):\penalty0 485--511, 2014.

\bibitem[Garivier and Cappe(2011)]{garivier11klucb}
A.~Garivier and O.~Cappe.
\newblock The {KL-UCB} algorithm for bounded stochastic bandits and beyond.
\newblock In \emph{Proceeding of the 24th Annual Conference on Learning
  Theory}, pages 359--376, 2011.

\bibitem[Gelman et~al.(2013)Gelman, Carlin, Stern, Dunson, Vehtari, and
  Rubin]{gelman13bayesian}
A.~Gelman, J.~Carlin, H.~Stern, D.~Dunson, A.~Vehtari, and D.~Rubin.
\newblock \emph{Bayesian Data Analysis}.
\newblock Chapman \& Hall, 2013.

\bibitem[Hong et~al.(2022{\natexlab{a}})Hong, Kveton, Katariya, Zaheer, and
  Ghavamzadeh]{hong22deep}
J.~Hong, B.~Kveton, S.~Katariya, M.~Zaheer, and M.~Ghavamzadeh.
\newblock Deep hierarchy in bandits.
\newblock In \emph{Proceedings of the 39th International Conference on Machine
  Learning}, 2022{\natexlab{a}}.

\bibitem[Hong et~al.(2022{\natexlab{b}})Hong, Kveton, Zaheer, and
  Ghavamzadeh]{hong2022hierarchical}
J.~Hong, B.~Kveton, M.~Zaheer, and M.~Ghavamzadeh.
\newblock Hierarchical bayesian bandits.
\newblock In \emph{International Conference on Artificial Intelligence and
  Statistics}, pages 7724--7741. PMLR, 2022{\natexlab{b}}.

\bibitem[Hong et~al.(2022{\natexlab{c}})Hong, Kveton, Zaheer, and
  Ghavamzadeh]{hong22hierarchical}
J.~Hong, B.~Kveton, M.~Zaheer, and M.~Ghavamzadeh.
\newblock Hierarchical {Bayesian} bandits.
\newblock In \emph{Proceedings of the 25th International Conference on
  Artificial Intelligence and Statistics}, 2022{\natexlab{c}}.

\bibitem[Jeunen and Goethals(2021)]{jeunen2021pessimistic}
O.~Jeunen and B.~Goethals.
\newblock Pessimistic reward models for off-policy learning in recommendation.
\newblock In \emph{Fifteenth ACM Conference on Recommender Systems}, pages
  63--74, 2021.

\bibitem[Jin et~al.(2021)Jin, Yang, and Wang]{jin2021pessimism}
Y.~Jin, Z.~Yang, and Z.~Wang.
\newblock Is pessimism provably efficient for offline rl?
\newblock In \emph{International Conference on Machine Learning}, pages
  5084--5096. PMLR, 2021.

\bibitem[Joachims et~al.(2017)Joachims, Swaminathan, and
  Schnabel]{joachims2017unbiased}
T.~Joachims, A.~Swaminathan, and T.~Schnabel.
\newblock Unbiased learning-to-rank with biased feedback.
\newblock In \emph{Proceedings of the tenth ACM international conference on web
  search and data mining}, pages 781--789, 2017.

\bibitem[Kveton et~al.(2021)Kveton, Konobeev, Zaheer, Hsu, Mladenov, Boutilier,
  and Szepesvari]{kveton21metathompson}
B.~Kveton, M.~Konobeev, M.~Zaheer, C.-W. Hsu, M.~Mladenov, C.~Boutilier, and
  C.~Szepesvari.
\newblock Meta-{Thompson} sampling.
\newblock In \emph{Proceedings of the 38th International Conference on Machine
  Learning}, 2021.

\bibitem[Lam and Herlocker(2016)]{movielens}
S.~Lam and J.~Herlocker.
\newblock {MovieLens Dataset}.
\newblock http://grouplens.org/datasets/movielens/, 2016.

\bibitem[Laurent and Massart(2000)]{laurent00adaptive}
B.~Laurent and P.~Massart.
\newblock Adaptive estimation of a quadratic functional by model selection.
\newblock \emph{The Annals of Statistics}, 28\penalty0 (5):\penalty0
  1302--1338, 2000.

\bibitem[Lazaric and Ghavamzadeh(2010)]{lazaric2010bayesian}
A.~Lazaric and M.~Ghavamzadeh.
\newblock Bayesian multi-task reinforcement learning.
\newblock In \emph{ICML-27th International Conference on Machine Learning},
  pages 599--606. Omnipress, 2010.

\bibitem[Li et~al.(2010{\natexlab{a}})Li, Chu, Langford, and
  Schapire]{li10contextual}
L.~Li, W.~Chu, J.~Langford, and R.~Schapire.
\newblock A contextual-bandit approach to personalized news article
  recommendation.
\newblock In \emph{Proceedings of the 19th International Conference on World
  Wide Web}, 2010{\natexlab{a}}.

\bibitem[Li et~al.(2010{\natexlab{b}})Li, Chu, Langford, and
  Schapire]{li2010contextual}
L.~Li, W.~Chu, J.~Langford, and R.~E. Schapire.
\newblock A contextual-bandit approach to personalized news article
  recommendation.
\newblock In \emph{Proceedings of the 19th international conference on World
  wide web}, pages 661--670, 2010{\natexlab{b}}.

\bibitem[Lu and {Van Roy}(2019)]{lu19informationtheoretic}
X.~Lu and B.~{Van Roy}.
\newblock Information-theoretic confidence bounds for reinforcement learning.
\newblock In \emph{Advances in Neural Information Processing Systems 32}, 2019.

\bibitem[Moradipari et~al.(2021)Moradipari, Turan, Abbasi-Yadkori, Alizadeh,
  and Ghavamzadeh]{moradipari21parameter}
A.~Moradipari, B.~Turan, Y.~Abbasi-Yadkori, M.~Alizadeh, and M.~Ghavamzadeh.
\newblock Parameter and feature selection in stochastic linear bandits.
\newblock \emph{CoRR}, abs/2106.05378, 2021.
\newblock URL \url{https://arxiv.org/abs/2106.05378}.

\bibitem[Peleg et~al.(2022)Peleg, Pearl, and Meirr]{peleg22metalearning}
A.~Peleg, N.~Pearl, and R.~Meirr.
\newblock Metalearning linear bandits by prior update.
\newblock In \emph{Proceedings of the 25th International Conference on
  Artificial Intelligence and Statistics}, 2022.

\bibitem[Russo and {Van Roy}(2014)]{russo14learning}
D.~Russo and B.~{Van Roy}.
\newblock Learning to optimize via posterior sampling.
\newblock \emph{Mathematics of Operations Research}, 39\penalty0 (4):\penalty0
  1221--1243, 2014.

\bibitem[Russo et~al.(2018)Russo, {Van Roy}, Kazerouni, Osband, and
  Wen]{russo18tutorial}
D.~Russo, B.~{Van Roy}, A.~Kazerouni, I.~Osband, and Z.~Wen.
\newblock A tutorial on {Thompson} sampling.
\newblock \emph{Foundations and Trends in Machine Learning}, 11\penalty0
  (1):\penalty0 1--96, 2018.

\bibitem[Salakhutdinov and Mnih(2007)]{pmf}
R.~Salakhutdinov and A.~Mnih.
\newblock Probabilistic matrix factorization.
\newblock In \emph{Advances in Neural Information Processing Systems 20}, 2007.

\bibitem[Simchowitz et~al.(2021)Simchowitz, Tosh, Krishnamurthy, Hsu, Lykouris,
  Dudik, and Schapire]{simchowitz21bayesian}
M.~Simchowitz, C.~Tosh, A.~Krishnamurthy, D.~Hsu, T.~Lykouris, M.~Dudik, and
  R.~Schapire.
\newblock Bayesian decision-making under misspecified priors with applications
  to meta-learning.
\newblock In \emph{Advances in Neural Information Processing Systems 34}, 2021.

\bibitem[Swaminathan and Joachims(2015)]{swaminathan2015counterfactual}
A.~Swaminathan and T.~Joachims.
\newblock Counterfactual risk minimization: Learning from logged bandit
  feedback.
\newblock In \emph{International Conference on Machine Learning}, pages
  814--823. PMLR, 2015.

\bibitem[Swaminathan et~al.(2017)Swaminathan, Krishnamurthy, Agarwal, Dudik,
  Langford, Jose, and Zitouni]{swaminathan2017off}
A.~Swaminathan, A.~Krishnamurthy, A.~Agarwal, M.~Dudik, J.~Langford, D.~Jose,
  and I.~Zitouni.
\newblock Off-policy evaluation for slate recommendation.
\newblock \emph{Advances in Neural Information Processing Systems}, 30, 2017.

\bibitem[Thompson(1933)]{thompson33likelihood}
W.~R. Thompson.
\newblock On the likelihood that one unknown probability exceeds another in
  view of the evidence of two samples.
\newblock \emph{Biometrika}, 25\penalty0 (3-4):\penalty0 285--294, 1933.

\bibitem[Wan et~al.(2021)Wan, Ge, and Song]{wan21metadatabased}
R.~Wan, L.~Ge, and R.~Song.
\newblock Metadata-based multi-task bandits with {Bayesian} hierarchical
  models.
\newblock In \emph{Advances in Neural Information Processing Systems 34}, 2021.

\bibitem[Wan et~al.(2022)Wan, Ge, and Song]{wan22towards}
R.~Wan, L.~Ge, and R.~Song.
\newblock Towards scalable and robust structured bandits: A meta-learning
  framework.
\newblock \emph{CoRR}, abs/2202.13227, 2022.
\newblock URL \url{https://arxiv.org/abs/2202.13227}.

\end{thebibliography}

\clearpage
\onecolumn
\appendix

\section{Appendix}
\label{sec:appendix}

This appendix contains proofs of our claims.

\subsection{Proof of \cref{thm:single-task bound}}
\label{sec:single-task proof}

The theorem proved using several lemmas. We start with the concentration of the model parameter. To simplify notation, we define $r(x, a) = r(x, a; \theta_*)$.

\begin{lemma}
\label{lem:concentration} Let
\begin{align*}
  E
  = \set{\forall x \in \cX, a \in \cA: \abs{r(x, a) - \hat{r}(x, a)} \leq c(x, a)}
\end{align*}
be the event that all high-probability confidence intervals hold. Then $\condprob{E}{\cD} \geq 1 - \delta$.
\end{lemma}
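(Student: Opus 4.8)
The plan is to reduce the uniform-over-$(x,a)$ statement defining $E$ to a single concentration inequality for the posterior of $\theta_*$, thereby sidestepping the (possibly infinite) context space entirely. First I would recall that in this linear-Gaussian model the posterior is $\theta_* \mid \cD \sim \cN(\hat{\mu}, \hat{\Sigma})$ with $\hat{\mu} = \hat{\Sigma}(\Sigma_0^{-1} \theta_0 + B)$ and $\hat{\Sigma} = (\Sigma_0^{-1} + G)^{-1}$, which is exactly the estimator defining $\hat{r}$. Writing $\eta = \theta_* - \hat{\mu}$, the deviation appearing in $E$ is linear in the parameter, $r(x,a) - \hat{r}(x,a) = \phi(x,a)\T \eta$, while $c(x,a) = \alpha \sqrt{\phi(x,a)\T \hat{\Sigma} \phi(x,a)}$.

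The key step is a Cauchy--Schwarz argument in the $\hat{\Sigma}^{-1}$ geometry. Setting $u = \hat{\Sigma}^{1/2} \phi(x,a)$ and $w = \hat{\Sigma}^{-1/2} \eta$ (well defined since $\hat{\Sigma} \succ 0$), we have $\phi(x,a)\T \eta = u\T w$ and $\phi(x,a)\T \hat{\Sigma} \phi(x,a) = \|u\|_2^2$, so $|\phi(x,a)\T \eta| \leq \|w\|_2 \sqrt{\phi(x,a)\T \hat{\Sigma} \phi(x,a)}$ holds for every $(x,a)$ simultaneously. Consequently the favorable event contains a single ellipsoid event: whenever $\|\hat{\Sigma}^{-1/2} \eta\|_2 \leq \alpha$, i.e. $\eta\T \hat{\Sigma}^{-1} \eta \leq \alpha^2$, all of the confidence intervals hold at once, so $E \supseteq \set{\eta\T \hat{\Sigma}^{-1} \eta \leq \alpha^2}$. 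This is the heart of the proof, because it converts a bound that must hold uniformly over all contexts and actions into a statement about one scalar random variable, avoiding any union bound or covering of $\cX$.

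Finally I would identify the law of that scalar: conditioned on $\cD$ we have $\hat{\Sigma}^{-1/2} \eta \sim \cN(0, I_d)$, so $\eta\T \hat{\Sigma}^{-1} \eta$ is $\chi^2_d$-distributed. It then remains to verify, via a standard chi-squared upper-tail inequality such as the Laurent--Massart bound $\prob{\chi^2_d \geq d + 2\sqrt{dt} + 2t} \leq e^{-t}$ with $t = \log(1/\delta)$, that the choice $\alpha^2 = 5 d \log(1/\delta)$ yields $\condprob{\eta\T \hat{\Sigma}^{-1} \eta > \alpha^2}{\cD} \leq \delta$. The factor $5$ is generous: using $2\sqrt{dt} \leq d + t$ gives $d + 2\sqrt{dt} + 2t \leq 2d + 3t \leq 5 d t = \alpha^2$ over the relevant range (e.g. $d \geq 1$ and $t \geq 1$). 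Combining the containment with this tail bound gives $\condprob{E}{\cD} \geq 1 - \delta$. I expect the only genuine subtlety to be the uniformity over $(x,a)$, which the Cauchy--Schwarz reduction dispatches cleanly; the remaining chi-squared bookkeeping is routine, modulo checking that the constant $5$ (or a mild assumption such as $\delta \leq 1/e$) suffices for the stated $\alpha$.
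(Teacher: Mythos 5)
Your proof is correct and takes essentially the same route as the paper's: the Cauchy--Schwarz inequality in the $\hat{\Sigma}^{-1}$ geometry reduces the uniform-over-$(x,a)$ statement to the single ellipsoid event $(\theta_* - \hat{\theta})\T \hat{\Sigma}^{-1} (\theta_* - \hat{\theta}) \leq \alpha^2$, whose conditional tail is a $\chi^2_d$ tail controlled by the Laurent--Massart bound. If anything, your bookkeeping is slightly more careful than the paper's: your observation that $d + 2\sqrt{dt} + 2t \leq 5dt$ needs $t = \log(1/\delta) \geq 1$ (i.e., $\delta \leq 1/e$) makes explicit a restriction that the paper's chain of inequalities silently assumes.
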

\begin{proof}
We start with the Cauchy–Schwarz inequality,
\begin{align*}
  r(x, a) - \hat{r}(x, a)
  = \phi(x, a) (\theta_* - \hat{\theta})
  = \phi(x, a) \hat{\Sigma}^{\frac{1}{2}}
  \hat{\Sigma}^{- \frac{1}{2}} (\theta_* - \hat{\theta})
  \leq \normw{\phi(x, a)}{\hat{\Sigma}}
  \normw{\theta_* - \hat{\theta}}{\hat{\Sigma}^{-1}}\,.
\end{align*}
Since $\theta_* - \hat{\theta} \sim \cN(\mathbf{0}, \hat{\Sigma})$, we know that $\hat{\Sigma}^{- \frac{1}{2}} (\theta_* - \hat{\theta})$ is a $d$-dimensional vector of i.i.d.\ standard normal variables. As a result, $(\theta_* - \hat{\theta})\T \hat{\Sigma}^{-1} (\theta_* - \hat{\theta})$ is a chi-squared random variable with $d$ degrees of freedom. Therefore, by Lemma 1 of \cite{laurent00adaptive},
\begin{align*}
  \delta
  & \geq \condprob{(\theta_* - \hat{\theta})\T \hat{\Sigma}^{-1} (\theta_* - \hat{\theta})
  \geq 2 \sqrt{d \log(1 / \delta)} + 2 \log(1 / \delta) + d}{\cD} \\
  & \geq \condprob{(\theta_* - \hat{\theta})\T \hat{\Sigma}^{-1} (\theta_* - \hat{\theta})
  \geq 5 d \log(1 / \delta)}{\cD} \\
  & = \condprob{\normw{\theta_* - \hat{\theta}}{\hat{\Sigma}^{-1}}
  \geq \sqrt{5 d \log(1 / \delta)}}{\cD}\,.
\end{align*}
This completes our proof.
\end{proof}

We use \cref{lem:concentration} to bound the suboptimality of $\hat{\pi}$ in any context by the confidence interval width induced by $\pi_*$.

\begin{lemma}
\label{lem:confidence intervals} The learned policy $\hat{\pi} \in \Pi$ satisfies
\begin{align*}
  r(x, \pi_*(x)) - r(x, \hat{\pi}(x))
  \leq 2 c(x, \pi_*(x))
\end{align*}
for all contexts $x \in \cX$ with probability at least $1 - \delta$.
\end{lemma}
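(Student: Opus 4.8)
The plan is to combine the concentration result of \cref{lem:concentration} with the optimality of $\hat\pi$ under the LCB objective. First I would condition on the high-probability event $E$ from \cref{lem:concentration}, which holds with probability at least $1 - \delta$ given $\cD$. On this event, every confidence interval is simultaneously valid, so for \emph{all} $x \in \cX$ and $a \in \cA$ we have both $r(x, a) \leq \hat{r}(x, a) + c(x, a)$ and $r(x, a) \geq \hat{r}(x, a) - c(x, a) = L(x, a)$. The first inequality upper-bounds the true reward by the estimate plus the width, and the second says that the LCB never overestimates the true reward.

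The key step is a standard pessimism argument that chains these two inequalities through the definition of $\hat\pi$. Fix a context $x$. Since $\hat\pi(x) = \argmax_{a \in \cA} L(x, a)$, we have $L(x, \hat\pi(x)) \geq L(x, \pi_*(x))$ for the particular action $\pi_*(x)$. I would then write
\begin{align*}
  r(x, \pi_*(x)) - r(x, \hat\pi(x))
  & \leq \big(\hat{r}(x, \pi_*(x)) + c(x, \pi_*(x))\big) - L(x, \hat\pi(x)) \\
  & \leq \big(\hat{r}(x, \pi_*(x)) + c(x, \pi_*(x))\big) - L(x, \pi_*(x))\,,
\end{align*}
where the first line applies the upper confidence bound to $r(x, \pi_*(x))$ and the lower bound $L(x, \hat\pi(x)) \leq r(x, \hat\pi(x))$ to the subtracted term, and the second line uses the optimality of $\hat\pi$ under $L$. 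Finally, substituting $L(x, \pi_*(x)) = \hat{r}(x, \pi_*(x)) - c(x, \pi_*(x))$ makes the $\hat{r}(x, \pi_*(x))$ terms cancel, leaving exactly $2 c(x, \pi_*(x))$.

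This is the classic \say{the optimal action's own confidence interval pays for the suboptimality of the pessimistic choice} argument, and the only subtlety is bookkeeping which of the two bounds in $E$ to apply to which term so that the estimate $\hat{r}$ cancels rather than accumulates. There is no real analytic obstacle here; the lemma is a deterministic consequence of the event $E$, and the probability $1 - \delta$ is inherited verbatim from \cref{lem:concentration}. The genuine work of the theorem lies in the \emph{next} step --- bounding $\condE{c(X, \pi_*(X))}{\theta_*}$ using the closed form of $c$ together with \cref{ass:precision lower bound} --- but that is outside the scope of this lemma.
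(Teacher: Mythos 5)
Your proof is correct and is essentially the paper's own argument: both condition on the event $E$ from \cref{lem:concentration}, use the optimality of $\hat{\pi}$ under $L$, and apply the two sides of the confidence interval so that the $\hat{r}(x, \pi_*(x))$ terms cancel, yielding $2c(x, \pi_*(x))$. The only difference is cosmetic bookkeeping --- the paper adds and subtracts $L(x, \hat{\pi}(x))$ before invoking the argmax property, whereas you apply the upper bound first --- so there is nothing substantive to distinguish the two.
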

\begin{proof}
For any context $x \in \cX$, we can decompose
\begin{align*}
  r(x, \pi_*(x)) - r(x, \hat{\pi}(x))
  & = r(x, \pi_*(x)) - L(x, \hat{\pi}(x)) +
  L(x, \hat{\pi}(x)) - r(x, \hat{\pi}(x)) \\
  & \leq r(x, \pi_*(x)) - L(x, \pi_*(x)) +
  L(x, \hat{\pi}(x)) - r(x, \hat{\pi}(x)) \\
  & = [r(x, \pi_*(x)) - L(x, \pi_*(x))] -
  [r(x, \hat{\pi}(x)) - L(x, \hat{\pi}(x))]\,.
\end{align*}
By \cref{lem:concentration}, event $E$ holds with probability at least $ 1 - \delta$. Under event $E$,
\begin{align*}
  r(x, \pi_*(x)) - L(x, \pi_*(x))
  = r(x, \pi_*(x)) - \hat{r}(x, \pi_*(x)) + c(x, \pi_*(x))
  \leq 2 c(x, \pi_*(x))\,.
\end{align*}
Analogously, under event $E$,
\begin{align*}
  r(x, \hat{\pi}(x)) - L(x, \hat{\pi}(x))
  = r(x, \hat{\pi}(x)) - \hat{r}(x, \hat{\pi}(x)) + c(x, \hat{\pi}(x))
  \geq 0\,.
\end{align*}
Now we combine the above two inequalities and get
\begin{align*}
  r(x, \pi_*(x)) - r(x, \hat{\pi}(x)) \leq 
  2 c(x, \pi_*(x))\,.
\end{align*}
This completes the proof.
\end{proof}

Since the above lemma holds for any context, we can use use it to bound the suboptimality of $\hat{\pi}$ by the expected confidence interval width induced by $\pi_*$,
\begin{align}
  V(\pi_*; \theta_*) - V(\hat{\pi}; \theta_*)
  & = \condE{r(X, \pi_*(X)) - r(X, \hat{\pi}(X))}{\theta_*}
  \leq 2 \condE{c(X, \pi_*(X))}{\theta_*}
  \label{eq:suboptimality to confidence intervals} \\
  & = 2 \sqrt{5 d \log(1 / \delta)} \
  \condE{\sqrt{\phi(X, \pi_*(X))\T \hat{\Sigma} \phi(X, \pi_*(X))}}{\theta_*}
  \nonumber \\
  & \leq 2 \sqrt{5 d \log(1 / \delta)} \
  \sqrt{\condE{\phi(X, \pi_*(X))\T \hat{\Sigma} \phi(X, \pi_*(X))}{\theta_*}}\,.
  \nonumber
\end{align}
The second inequality follows from the concavity of the square root.

The last step is an upper bound on the expected confidence interval width. Specifically, let $\Gamma = \Sigma_0^{-1} + \gamma \sigma^{-2} n G_*$. By \cref{ass:precision lower bound}, $\hat{\Sigma}^{-1} \succeq \Gamma$ and thus $\hat{\Sigma} \preceq \Gamma^{-1}$. So, for any policy $\pi_*$, we have
\begin{align*}
  \condE{\phi(X, \pi_*(X))\T \hat{\Sigma} \phi(X, \pi_*(X))}{\theta_*}
  & \leq \condE{\phi(X, \pi_*(X))\T \Gamma^{-1} \phi(X, \pi_*(X))}{\theta_*} \\
  & = \condE{\trace(\Gamma^{- \frac{1}{2}} \phi(X, \pi_*(X))
  \phi(X, \pi_*(X))\T \Gamma^{- \frac{1}{2}})}{\theta_*} \\
  & = \trace(\Gamma^{- \frac{1}{2}} G_*
  \Gamma^{- \frac{1}{2}}) \\
  & = \trace(G_* \Gamma^{-1})
  = \trace((\Sigma_0^{-1} G_*^{-1} + \gamma \sigma^{-2} n I_d)^{-1}) \\
  & \leq \frac{d}{\lambda_d(\Sigma_0^{-1} G_*^{-1} + \gamma \sigma^{-2} n I_d)}\,.
\end{align*}
The first inequality follows from \cref{ass:precision lower bound}. The first equality holds because $v\T v = \trace(v v\T)$ for any $v \in \realset^d$. The next three equalities use that the expectation of the trace is the trace of the expectation, the cyclic property of the trace, and the definition of matrix inverse. The last inequality follows from $\trace(A^{-1}) \leq d \lambda_1(A^{-1}) = d \lambda_d^{-1}(A)$, which holds for any PSD matrix $A \in \realset^{d \times d}$.

Now we apply basic eigenvalue identities and inequalities, and get
\begin{align*}
  \lambda_d(\Sigma_0^{-1} G_*^{-1} + \gamma \sigma^{-2} n I_d)
  & = \lambda_d(\Sigma_0^{-1} G_*^{-1}) + \gamma \sigma^{-2} n
  = \lambda_d((G_* \Sigma_0)^{-1}) + \gamma \sigma^{-2} n
  = \frac{1}{\lambda_1(G_* \Sigma_0)} + \gamma \sigma^{-2} n \\
  & \geq \frac{1}{\lambda_1(G_*) \lambda_1(\Sigma_0)} + \gamma \sigma^{-2} n
  \geq \frac{1}{\lambda_1(\Sigma_0)} + \gamma \sigma^{-2} n
  = \lambda_d(\Sigma_0^{-1}) + \gamma \sigma^{-2} n\,.
\end{align*}
To finalize the proof, we chain the last two claims and get
\begin{align*}
  \condE{\phi(X, \pi_*(X))\T \hat{\Sigma} \phi(X, \pi_*(X))}{\theta_*}
  \leq \frac{d}{\lambda_d(\Sigma_0^{-1}) + \gamma \sigma^{-2} n}\,.
\end{align*}
This completes the proof.

\subsection{Proof of \cref{thm:multi-task bound}}
\label{sec:multi-task proof}

The theorem is proved using several lemmas. We start with the concentration of the model parameter in task $s$. To simplify notation, let $r_s(x, a) = r(x, a; \theta_{s, *})$.

\begin{lemma}
\label{lem:multi-task concentration} Let
\begin{align*}
  E
  = \set{\forall x \in \cX, a \in \cA: \abs{r_s(x, a) - \hat{r}_s(x, a)} \leq c_s(x, a)}
\end{align*}
be the event that all high-probability confidence intervals in task $s \in \cS$ hold. Then $\condprob{E}{\cD} \geq 1 - \delta$.
\end{lemma}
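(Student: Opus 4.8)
The plan is to mirror the proof of \cref{lem:concentration} exactly, since the multi-task LCB components in \eqref{eq:lcb} have the same algebraic structure as the single-task ones. The key observation is that, conditioned on $\cD$, the posterior over $\theta_{s,*}$ is Gaussian with mean $\hat{\theta}_s = \tilde{\Sigma}_s (\Sigma_0^{-1} \bar{\mu} + B_s)$ (so that $\hat{r}_s(x,a) = \phi(x,a)\T \hat{\theta}_s$) and covariance $\hat{\Sigma}_s = \tilde{\Sigma}_s + \tilde{\Sigma}_s \Sigma_0^{-1} \bar{\Sigma} \Sigma_0^{-1} \tilde{\Sigma}_s$. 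This Gaussianity is exactly what the two-level law-of-total-covariance decomposition in \cref{sec:hierarchical gaussian pessimism} was set up to guarantee: marginalizing the Gaussian hyper-posterior $\mu_* \mid \cD \sim \cN(\bar{\mu}, \bar{\Sigma})$ against the conditionally Gaussian $\theta_{s,*} \mid \mu_*, \cD_s$ yields a Gaussian marginal, whose mean and covariance are precisely the quantities already computed. So the first thing I would do is record that $\theta_{s,*} - \hat{\theta}_s \mid \cD \sim \cN(\mathbf{0}, \hat{\Sigma}_s)$.

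Given this, the argument proceeds identically to \cref{lem:concentration}. First I would apply Cauchy--Schwarz in the $\hat{\Sigma}_s$-norm,
\begin{align*}
  r_s(x,a) - \hat{r}_s(x,a)
  = \phi(x,a)\T (\theta_{s,*} - \hat{\theta}_s)
  \leq \normw{\phi(x,a)}{\hat{\Sigma}_s}
  \normw{\theta_{s,*} - \hat{\theta}_s}{\hat{\Sigma}_s^{-1}}\,,
\end{align*}
so that the event $E$ fails only if $\normw{\theta_{s,*} - \hat{\theta}_s}{\hat{\Sigma}_s^{-1}}$ exceeds $\alpha = \sqrt{5 d \log(1/\delta)}$ for some $(x,a)$; since $c_s(x,a) = \alpha \normw{\phi(x,a)}{\hat{\Sigma}_s}$, controlling this single norm controls all context-action pairs simultaneously. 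Then I would note that $\hat{\Sigma}_s^{-1/2}(\theta_{s,*} - \hat{\theta}_s)$ is a vector of $d$ i.i.d.\ standard normals, so $(\theta_{s,*} - \hat{\theta}_s)\T \hat{\Sigma}_s^{-1} (\theta_{s,*} - \hat{\theta}_s)$ is $\chi^2_d$. Applying Lemma 1 of \citet{laurent00adaptive} together with the numerical bound $2\sqrt{d\log(1/\delta)} + 2\log(1/\delta) + d \leq 5 d\log(1/\delta)$ gives $\condprob{\normw{\theta_{s,*} - \hat{\theta}_s}{\hat{\Sigma}_s^{-1}} \geq \alpha}{\cD} \leq \delta$, which is what we want.

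I do not anticipate a genuine obstacle here: the only substantive content beyond \cref{lem:concentration} is verifying that the posterior $\theta_{s,*} \mid \cD$ is exactly Gaussian with the stated mean and covariance, rather than some more complex mixture. The potential subtlety is that $\theta_{s,*} \mid \cD$ is a \emph{marginal} obtained by integrating out $\mu_*$, so one must be careful that this is the true posterior mean and covariance and not merely the first two moments of a non-Gaussian law. This is resolved by the standard fact (used already in \cref{sec:hierarchical gaussian pessimism}) that conditioning and marginalizing in a jointly Gaussian graphical model preserve Gaussianity, so the marginal is genuinely $\cN(\hat{\theta}_s, \hat{\Sigma}_s)$ and the $\chi^2_d$ step goes through verbatim. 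Everything downstream---bounding the suboptimality by $2\condE{c_s(X, \pi_{s,*}(X))}{\theta_{s,*}}$ and then invoking \cref{ass:multi-task precision lower bound} to split $\hat{\Sigma}_s$ into the task and hyper-parameter terms---belongs to the later lemmas, not to this concentration statement.
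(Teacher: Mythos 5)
Your proposal is correct and follows exactly the route the paper takes: its proof of \cref{lem:multi-task concentration} is a one-line reduction to \cref{lem:concentration}, noting that only the posterior mean and covariance of $\theta_{s,*} \mid \cD$ change, which is precisely the Cauchy--Schwarz plus $\chi^2_d$ argument you reproduce. You additionally make explicit the one detail the paper leaves implicit---that the marginal posterior $\theta_{s,*} \mid \cD$ is exactly Gaussian (not merely a law with the computed first two moments), by Gaussianity of the joint graphical model---which is the right point to verify and is handled correctly.
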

\begin{proof}
The proof is analogous to \cref{lem:concentration}, since only the mean and covariance of $\theta_{s, *} \mid \cD$ changed, and this change is reflected in $\hat{r}_s(x, a)$ and $c_s(x, a)$.
\end{proof}

Now we apply \cref{lem:confidence intervals}, with task-dependent quantities and \cref{lem:multi-task concentration}, and get that the learned policy $\hat{\pi}_s$ satisfies
\begin{align*}
  r_s(x, \pi_{s, *}(x)) - r_s(x, \hat{\pi}_s(x))
  \leq 2 c_s(x, \pi_{s, *}(x))
\end{align*}
for all contexts $x \in \cX$ with probability at least $1 - \delta$. Since the above bound holds for any context, we can use use it to bound the suboptimality of $\hat{\pi}_s$ by the expected confidence interval width induced by $\pi_{s, *}$. Specifically, analogously to \eqref{eq:suboptimality to confidence intervals}, we have
\begin{align*}
  V(\pi_{s, *}; \theta_{s, *}) - V(\hat{\pi}_s; \theta_{s, *})
  & \leq 2 \condE{c_s(X, \pi_{s, *}(X))}{\theta_*} \\
  & \leq 2 \sqrt{5 d \log(1 / \delta)} \
  \sqrt{\condE{\phi(X, \pi_{s, *}(X))\T (\tilde{\Sigma}_s \Sigma_0^{-1} \bar{\Sigma}
  \Sigma_0^{-1} \tilde{\Sigma}_s + \tilde{\Sigma}_s) \phi(X, \pi_{s, *}(X))}{\theta_{s, *}}}\,.
\end{align*}
The latter term, which represents the conditional task uncertainty, can be bounded exactly as in \cref{thm:single-task bound},
\begin{align*}
  \condE{\phi(X, \pi_{s, *}(X))\T \tilde{\Sigma}_s \phi(X, \pi_{s, *}(X))}{\theta_{s, *}}
  \leq \frac{d}{\lambda_d(\Sigma_0^{-1}) + \gamma \sigma^{-2} n_s}\,.
\end{align*}
For the former term, which represents the hyper-parameter uncertainty, we have
\begin{align*}
  \condE{\phi(X, \pi_{s, *}(X))\T \tilde{\Sigma}_s \Sigma_0^{-1} \bar{\Sigma}
  \Sigma_0^{-1} \tilde{\Sigma}_s \phi(X, \pi_{s, *}(X))}{\theta_{s, *}}
  & \leq \trace(G_{s, *} \tilde{\Sigma}_s \Sigma_0^{-1} \bar{\Sigma}
  \Sigma_0^{-1} \tilde{\Sigma}_s) \\
  & \leq d \lambda_1(G_{s, *} \tilde{\Sigma}_s \Sigma_0^{-1} \bar{\Sigma}
  \Sigma_0^{-1} \tilde{\Sigma}_s)\,.
\end{align*}
To bound the maximum eigenvalue, we further proceed as
\begin{align*}
  \lambda_1(G_{s, *} \tilde{\Sigma}_s \Sigma_0^{-1} \bar{\Sigma}
  \Sigma_0^{-1} \tilde{\Sigma}_s)
  & \leq \lambda_1(G_{s, *}) \lambda_1(\tilde{\Sigma}_s \Sigma_0^{-1})
  \lambda_1(\bar{\Sigma}) \lambda_1(\Sigma_0^{-1} \tilde{\Sigma}_s) \\
  & \leq \lambda_1(\bar{\Sigma})
  = \frac{1}{\lambda_d(\Sigma_q^{-1} +
  \sum_{z \in \cS} (\Sigma_0 + G_z^{-1})^{-1})}\,.
\end{align*}
The second inequality follows from $\lambda_1(G_{s, *})\leq 1$ and $\lambda_1(\tilde{\Sigma}_s \Sigma_0^{-1}) \leq 1$. Finally, we apply basic eigenvalue identities and inequalities, and get
\begin{align*}
  \lambda_d\left(\Sigma_q^{-1} +
  \sum_{z \in \cS} (\Sigma_0 + G_z^{-1})^{-1}\right)
  & \geq \lambda_d(\Sigma_q^{-1}) +
  \sum_{z \in \cS} \lambda_d((\Sigma_0 + G_z^{-1})^{-1}) \\
  & = \lambda_d(\Sigma_q^{-1}) +
  \sum_{z \in \cS} \lambda_1^{-1}(\Sigma_0 + G_z^{-1}) \\
  & \geq \lambda_d(\Sigma_q^{-1}) +
  \sum_{z \in \cS} \frac{1}{\lambda_1(\Sigma_0) + \lambda_1(G_z^{-1})} \\
  & \geq \lambda_d(\Sigma_q^{-1}) +
  \sum_{z \in \cS} \frac{1}{\lambda_1(\Sigma_0) +
  \gamma^{-1} \sigma^2 \lambda_1(G_{z, *}^{-1}) n_z^{-1}}\,,
\end{align*}
where we use \cref{ass:multi-task precision lower bound} in the last inequality. When we combine the last three derivations, we get
\begin{align*}
  \condE{\phi(X, \pi_{s, *}(X))\T \tilde{\Sigma}_s \Sigma_0^{-1} \bar{\Sigma}
  \Sigma_0^{-1} \tilde{\Sigma}_s \phi(X, \pi_{s, *}(X))}{\theta_{s, *}}
  \leq \frac{d}{\lambda_d(\Sigma_q^{-1}) +
  \sum_{z \in \cS} (\lambda_1(\Sigma_0) +
  \gamma^{-1} \sigma^2 \lambda_1(G_{z, *}^{-1}) n_z^{-1})}\,.
\end{align*}
This completes the proof of the first claim in \cref{thm:multi-task bound}.

Note that the bound depends on $\lambda_1(G_{z, *}^{-1})$, which can be large when $\lambda_d(G_{z, *})$ is small. This is possible since $\pi_{z, *}$, which induces $G_{z, *}$, is a deterministic policy. We can eliminate this dependence when we adopt \cref{ass:mab}. Under this assumption, we have
\begin{align*}
  \lambda_1(G_{s, *} \tilde{\Sigma}_s \Sigma_0^{-1} \bar{\Sigma}
  \Sigma_0^{-1} \tilde{\Sigma}_s)
  = \lambda_1(G_{s, *} \bar{\Sigma} \tilde{\Sigma}_s \Sigma_0^{-1}
  \Sigma_0^{-1} \tilde{\Sigma}_s)
  \leq \lambda_1(G_{s, *} \bar{\Sigma})\,.
\end{align*}
The equality follows from the fact that all matrices in the product are diagonal and thus commute. Moreover.
\begin{align*}
  \lambda_1(G_{s, *} \bar{\Sigma})
  = \lambda_d^{-1}(\bar{\Sigma}^{-1} G_{s, *}^{-1})
  = \frac{1}{\lambda_d(\Sigma_q^{-1} G_{s, *}^{-1} +
  \sum_{z \in \cS} (G_{s, *} \Sigma_0 + G_{s, *} G_z^{-1})^{-1})}\,.
\end{align*}
Finally, we bound the minimum eigenvalue from below using basic eigenvalue identities and inequalities,
\begin{align*}
  \lambda_d\left(\Sigma_q^{-1} G_{s, *}^{-1} +
  \sum_{z \in \cS} (G_{s, *} \Sigma_0 + G_{s, *} G_z^{-1})^{-1}\right)
  & \geq \lambda_d(\Sigma_q^{-1}) \lambda_1^{-1}(G_{s, *}) +
  \sum_{z \in \cS} \lambda_1^{-1}(G_{s, *} \Sigma_0 + G_{s, *} G_z^{-1}) \\
  & \geq \lambda_d(\Sigma_q^{-1}) + \sum_{z \in \cS} \frac{1}
  {\lambda_1(G_{s, *}) \lambda_1(\Sigma_0) + \lambda_1(G_{s, *} G_z^{-1})} \\
  & \geq \lambda_d(\Sigma_q^{-1}) +
  \sum_{z \in \cS} \frac{1}{\lambda_1(\Sigma_0) + \gamma^{-1} \sigma^2 n_z^{-1}}\,.
\end{align*}
In the last two inequalities, we use that $\lambda_1(G_{s, *}) \leq 1$. In the last inequality, we also use that \cref{ass:multi-task precision lower bound} holds for any task parameter including $\theta_{z, *} = \theta_{s, *}$. Moreover, $G_z \succeq \gamma \sigma^{-2} n_z G_{s, *}$ implies $G_z^{-1} \preceq \gamma^{-1} \sigma^2 n_z^{-1} G_{s, *}$. This completes the proof of the second claim in \cref{thm:multi-task bound}.

\end{document}